\documentclass{article} 
\usepackage{iclr2026_conference,times}

\usepackage{amsmath,amsfonts,bm}

\def\eqref#1{equation~\ref{#1}}

\def\1{\bm{1}}

\DeclareMathAlphabet{\mathsfit}{\encodingdefault}{\sfdefault}{m}{sl}
\SetMathAlphabet{\mathsfit}{bold}{\encodingdefault}{\sfdefault}{bx}{n}

\usepackage{hyperref}
\usepackage{url}

\usepackage[utf8]{inputenc} 
\usepackage[T1]{fontenc}    
\usepackage{hyperref}       
\usepackage{url}            
\usepackage{booktabs}       
\usepackage{amsfonts}       
\usepackage{nicefrac}       
\usepackage{microtype}      
\usepackage{xcolor}         
\usepackage{comment}

\usepackage{graphicx} 
\usepackage{geometry}
\usepackage{amsmath}
\usepackage{amsthm}
\usepackage{makecell}
\usepackage{xcolor}
\usepackage{multirow}
\usepackage{longtable}
\usepackage[table]{xcolor} 
\usepackage{colortbl} 
\usepackage{arydshln}
\newcommand{\HL}{\cellcolor[gray]{0.9}}
\usepackage{tabularx}
\usepackage{algorithm}
\usepackage{algpseudocode}

\newtheorem{theorem}{Theorem}[section]

\title{MXSens: Sensitivity-Aware Mixed-Precision Quantization for Efficient LLM Inference\\}

\author{%
  \noindent
  \begin{tabularx}{\textwidth}{>{\centering\arraybackslash\normalfont}X>{\centering\arraybackslash\normalfont}X>{\centering\arraybackslash\normalfont}X}
    \textbf{Simla Burcu Harma} & \textbf{Danila Mishin} & \textbf{Zhengyuan Su} \\
    \small{EcoCloud, EPFL} & \small{EcoCloud, EPFL} & \small{Tsinghua University} \\
    \small{\texttt{\href{mailto:simla.harma@epfl.ch}{simla.harma@epfl.ch}}} & \small{\texttt{\href{mailto:danila.mishin@epfl.ch}{danila.mishin@epfl.ch}}} & \small{\texttt{\href{mailto:su-zy21@mails.tsinghua.edu.cn}{su-zy21@mails.tsinghua.edu.cn}}} \\
    & & \\
    \textbf{Ayan Chakraborty} & \textbf{Elizaveta Kostenok} & \textbf{Dongho Ha$^\dagger$}\\
    \small{EcoCloud, EPFL} & \small{EcoCloud, EPFL} & \small{MangoBoost Inc.} \\
    \small{\texttt{\href{mailto:ayan.chakraborty@epfl.ch}{ayan.chakraborty@epfl.ch}}} & \small{\texttt{\href{mailto:elizaveta.kostenok@epfl.ch}{elizaveta.kostenok@epfl.ch}}} & \small{\texttt{\href{mailto:dongho.ha@mangoboost.io}{dongho.ha@mangoboost.io}}} \\
    & & \\
    \textbf{Babak Falsafi} & \textbf{Martin Jaggi} & \textbf{Yunho Oh} \\
    \small{EcoCloud, EPFL} & \small{EcoCloud, EPFL} & \small{Korea University} \\
    \small{\texttt{\href{mailto:babak.falsafi@epfl.ch}{babak.falsafi@epfl.ch}}} & \small{\texttt{\href{mailto:martin.jaggi@epfl.ch}{martin.jaggi@epfl.ch}}} & \small{\texttt{\href{mailto:yunho\_oh@korea.ac.kr}{yunho\_oh@korea.ac.kr}}} \\
  \end{tabularx}
  \\ \vspace{1em} \\
  \noindent
  \begin{tabularx}{\textwidth}{>{\centering\arraybackslash\normalfont}X}
    \textbf{Amir Yazdanbakhsh} \\
    \small{Google DeepMind} \\
    \small{\texttt{\href{mailto:ayazdan@google.com}{ayazdan@google.com}}} \\
  \end{tabularx}
}

\iclrfinalcopy 
\begin{document}

\maketitle

\begingroup
\renewcommand\thefootnote{}
\footnotetext{\hspace{-0.41em}$^\dagger$Work completed while at MangoBoost.}
\endgroup

\begin{abstract}
4-bit quantization enables efficient LLM inference, but suffers from significant accuracy degradation due to outliers. Prior work addresses this problem via data rotation or mixed-precision integer quantization, but often relies on software-managed scaling and frequent dequantization, incurring substantial overhead. Microscaling formats, such as MXINT, eliminate these inefficiencies by encoding scales in hardware, yet remain incompatible with rotation-based methods. Our analysis reveals that outliers vary in severity, from rare extremes to frequent mild deviations, and that quantization sensitivity is unevenly distributed across layers and columns. These insights motivate a fine-grained, sensitivity-guided approach. We introduce MXSens, a training-free method that assigns mixed mantissa bitwidths (4/6/8) based on column- and layer-wise sensitivity, naturally leveraging the block-wise structure of MXINT. MXSens outperforms state-of-the-art quantization methods across a range of models and tasks. Under the W4A4KV4 setting, MXSens achieves perplexities of 3.77 and 7.63 on LLaMA-2-70B and LLaMA-3-8B, respectively, substantially improving over existing baselines on WikiText-2. Our work establishes a new balance between accuracy and resource efficiency for LLM quantization.\footnote{Our code repository is publicly available at \url{https://github.com/parsa-epfl/mxsens}}
\end{abstract}

\section{Introduction}

Quantization has emerged as a critical technique for enabling time- and energy-efficient inference in large language models (LLMs), significantly reducing memory footprint and accelerating execution by using low-precision numerical formats.
Despite its effectiveness, LLM quantization remains challenging due to the presence of systematic outliers in activations~\citep{dettmers_llmint8_2022}.
These high-magnitude values
distort the distribution within quantization blocks and complicate the selection of appropriate scaling factors--particularly in low-bitwidth regimes such as 4-bit quantization.
Effectively addressing these outliers is therefore essential for maintaining model accuracy under aggressive quantization.

Recent methods tackle this problem in two main ways: (1) reshaping the tensor distributions using rotation matrices, as in QuaRot~\citep{ashkboos_quarot_2024}, RRS~\citep{yi_rrs_2025}, and SpinQuant~\citep{liu2024spinquant}, or (2) applying mixed-precision quantization, as done in Atom~\citep{zhao_atom_2024}, QoQ~\citep{lin_qoq_2024}, and QUIK~\citep{ashkboos_quik_2024}. All of these approaches use INT4 as the primary number format, but to maintain accuracy, they rely on sub-channel or per-group quantization with software-handled FP32 scaling. This approach requires frequent dequantization operations to and from FP32 during inference, which can incur up to 90\% dequantization overhead relative to pure INT4 baselines~\citep{lin_qoq_2024}, significantly undermining the benefits of low-bit quantization.

Microscaling formats eliminate this overhead by embedding power-of-two scale factors directly into the numeric representation~\citep{rouhani_mx_2023, rouhani_microscaling_2023}. Instead of relying on software-handled FP32 scales, microscaling formats embed a shared power-of-two scale per small block (e.g., 32 values), enabling low-bit inference using fixed-point MACs in the case of MXINT, or low-bit floating-point in the case of MXFP.
This hardware-friendly design maps naturally to modern AI accelerators and is already supported in practice, including on NVIDIA Blackwell GPUs~\citep{nvidia_blackwell} and Qualcomm Cloud AI 100~\citep{qualcomm_cloudai_202}.
However, microscaling formats are inherently incompatible with rotation-based methods for outlier handling. As shown in AMXFP~\citep{lee_amxfp_2025}, applying rotations on MXFP/MXINT tensors introduces group-wise asymmetry, which degrades quantization quality and cancels out the accuracy benefits of fine-grained scaling.
This incompatibility motivates a new direction: instead of modifying the tensor distributions via rotation, we focus on modifying precision, guided by the native structure of microscaling formats.

To enable precision-based outlier mitigation, we analyze the quantization sensitivity of LLMs and find that it varies widely across the model. Consistent with recent studies that categorize outliers into different types~\citep{lin_duquant_2024, yi_rrs_2025}, our analysis reveals a broader spectrum of activation sensitivity. Outliers vary in magnitude, ranging from rare, extremely high values to more moderate deviations, which we refer to as \textit{mild outliers} and still cause significant quantization error under 4-bit formats. These sensitivities are not uniformly distributed: some layers and columns are more affected than others. This variation in sensitivity motivates fine-grained precision allocation across weights, activations, and KV-cache. However, existing mixed-precision methods typically rely on static or heuristic bitwidth assignment, ignoring these differences and leading to suboptimal use of precision. Based on this analysis, we conclude that more flexible bitwidth assignment is needed. Microscaling formats, especially MXINT, provide a natural fit for such adaptation due to their block-wise layout and hardware compatibility~\citep{zhang_fast_2022, noh_flexblock_2023}.

We propose a novel outlier-aware quantization method MXSens, leveraging column- and layer-wise sensitivity to minimize accuracy loss while maintaining memory and compute efficiency without additional training. Building on recent advances in microscaling formats, MXSens is the next step toward practical, high-accuracy MXINT4 deployment: our sensitivity analysis shows that selective precision allocation is both necessary for accuracy and naturally supported by MXINT’s block-wise structure. Our contributions are as follows:

\begin{itemize}
    \item We introduce a sensitivity-guided \textit{triplet quantization} method that uses three distinct bitwidths (e.g., 4, 6, and 8 bits) for weights and activations, achieving state-of-the-art accuracy at comparable compression rates without modifying the model architecture or requiring retraining or fine-tuning.
    \item We adopt the MXINT numeric format for our quantization method, leveraging its block-wise structure and shared exponent to enable efficient mixed-precision implementation.
    \item MXSens outperforms SOTA quantization methods across a range of models and tasks. In particular, under the W4A4KV4 setting, MXSens achieves perplexities of 3.77 and 7.63 on LLaMA-2-70B and LLaMA-3-8B, respectively, on WikiText-2, substantially improving over existing baselines.
\end{itemize}

\section{Background}


Microscaling formats~\citep{rouhani2023microscaling}, such as MXFP and MXINT, have recently emerged as promising numeric representations for low-bit inference. They strike a balance between dynamic range and hardware efficiency by applying a shared power-of-two scale to fixed-size blocks of values (typically 32), encoded as a floating-point-style exponent. In these formats, MXFP stores each value as a low-bit mantissa and exponent, while MXINT simplifies further by using fixed-point integers within each block. The latter enables efficient processing using standard integer multiply–accumulate units. Microscaling formats are already seeing adoption in hardware, including NVIDIA’s Blackwell GPUs~\citep{nvidia_blackwell} and Qualcomm’s Cloud AI 100~\citep{qualcomm_cloudai_202}, both of which support block-level scaling and multiple precision modes.

Despite this progress, quantization at 4-bit precision still suffers from substantial accuracy loss, primarily due to outliers, which are defined as a small number of disproportionately large values in the tensor~\citep{timkey2021all,bondarenko2021understanding,dettmers_llmint8_2022,wei2022outlier,puccetti2022outliers}. These values dominate the shared scale of their block, compressing the representational range available for the remaining elements. Two main approaches have emerged as a solution for outliers: rotation-based and mixed-precision quantization. Rotation methods aim to reshape the input distribution before quantization~\citep{ashkboos_quarot_2024, yi_rrs_2025, lin_duquant_2024, liu2024spinquant}. However, such transformations often introduce asymmetry that disrupts the block structure required by microscaling formats, as shown in AMXFP~\citep{lee_amxfp_2025}. Rotation methods also face deployment challenges. For example, SpinQuant's rotations are particularly limited on models such as Gemma-2~\citep{mesnard_gemma_2024}, which feature both pre-norm and post-norm structures~\citep{liu2024spinquant}.
In addition, these methods typically leave certain components—most notably the Softmax output vectors—unquantized.

In contrast, mixed-precision quantization is naturally aligned with the structure of microscaling formats. Instead of reshaping distributions, it assigns higher precision to blocks or channels with high quantization sensitivity~\citep{zhao_atom_2024, ashkboos_quik_2024, lin_qoq_2024, liu_comet_2025, ramachandran_microscopiq_2025}. This approach is especially effective in MXINT, which uses fixed-point arithmetic and can vary mantissa bitwidths across blocks with minimal hardware changes. Our sensitivity analysis shows that quantization error tends to concentrate in a small subset of blocks or columns, meaning that selectively increasing precision in just those areas can significantly boost accuracy. This makes mixed-precision MXINT a practical and scalable approach to high-accuracy low-bit inference, preserving the compactness and compatibility of MX formats while addressing the limitations of uniform 4-bit quantization.

\section{Sensitivity Metrics for Mixed-Precision}
\label{sec:sens}
Quantizing specific components of a model, such as certain columns and layers, can have a disproportionate impact on overall model accuracy.
While assigning uniform bitwidth across all layers and columns offers simplicity, it overlooks the substantial variation in quantization sensitivity observed across different parts of LLMs. 
As models scale and vary in architecture, the magnitude and distribution of activation outliers differ significantly, resulting in substantial variability in quantization sensitivity across columns and layers. 
Identifying and understanding these sensitive components is crucial as it enables targeted allocation of higher precision, significantly improving the trade-off between accuracy and hardware overhead.
In this section, we present a systematic method for evaluating column- and layer-wise sensitivity and describe how to integrate these insights into our quantization method.

\subsection{Column-wise Sensitivity}

Column-wise sensitivity refers to the variation in quantization error across columns.
This sensitivity is largely driven by systematic outliers~\citep{dettmers_llmint8_2022, lee_owq_2024}—values that consistently deviate from the bulk distribution across samples—which distort the shared scale within a quantization block. 
In microscaling formats like MXINT, where blocks of 32 values share a power-of-two scale, such distortion leads to underutilized dynamic range and higher quantization error. 
Consequently, columns exhibiting these characteristics require higher bitwidths to preserve overall model accuracy.

We observe that while outliers are the primary cause of column-wise sensitivity, their varying magnitudes~\citep{yi_rrs_2025, lin_duquant_2024} significantly influence the degree of this sensitivity. Prior work has shown that within linear layers, a small subset of highly sensitive columns accounts for a disproportionate share of the overall quantization error~\citep{dettmers_llmint8_2022, lee_owq_2024}, underscoring the importance of column-level granularity. Specifically, columns containing high-magnitude outliers induce larger errors, as accurately representing such values demands a broader dynamic range. In contrast, columns with milder outliers can be quantized more aggressively without a major loss in accuracy. This variation in outlier magnitudes thus calls for a tailored, column-specific method for bitwidth allocation. By quantifying and exploiting these differences, we can achieve more precise control over quantization.

\begin{algorithm}
\caption{Column-Wise Sensitivity Extraction}
\label{alg:col_sens}
\begin{algorithmic}[1]
\Require Calibration dataset $\mathcal{D}_{calib}$, Model $\mathcal{M}$
\Ensure Column-wise sensitivities $S_{col}$ for all layers in $\mathcal{M}$
\State $S_{col} \leftarrow \emptyset$ \Comment{Initialize column-wise sensitivities}
\ForAll{linear layer $L_i \in \mathcal{M}$}
    \State $d_{in} \leftarrow \text{input\_dim}(L_i)$
    \State $H_i \leftarrow \mathbf{0}^{d_{in} \times d_{in}}$ \Comment{Initialize Hessian matrix for layer $L_i$}
    \ForAll{sample in $\mathcal{D}_{calib}$}
        \State $X_i \leftarrow \text{get\_activations}(L_i, \text{sample})$
        \State $H_i \leftarrow H_i + X_i^T X_i$ \Comment{Accumulate Hessian approximation}
    \EndFor
    \State $s_i \leftarrow \text{diag}(H_i)$ \Comment{Sensitivity is the diagonal of the Hessian}
    \State Add $s_i$ to $S_{col}$
\EndFor
\State \Return $S_{col}$
\end{algorithmic}
\end{algorithm}

To support targeted bitwidth allocation, we formally define column-wise sensitivity based on the statistical properties of tensor distributions. 
We begin by analyzing the case where quantization is applied to the weights of a linear layer. 
Let $W$ be the weight matrix, $\hat{W}$ its quantized version, and $X$ the input activations. 
The quantization-induced error on output channel $i$ is approximated as~\citep{lee_owq_2024}:

\begin{equation}
\label{eq:error}
E_i = \| W_{i, :}X - \hat{W}_{i, :}X \|_2^2 \approx \frac{1}{2} \Delta W_{i, :}^T H \Delta W_{i, :}
\end{equation}
Here, $H = X^T X$ denotes the empirical Hessian matrix of the loss function with respect to the weights, computed from input activations. 
This approximation implies that the quantization error is influenced by both the weight difference $\Delta W$ and the activation statistics encoded in $H$.

We define the sensitivity of column $j$ as the diagonal entry $H_{j,j}$, which reflects how sensitive the output is to quantization noise in that particular column.
This formulation naturally extends to activation quantization.
When both weights and activations are quantized, the same Hessian diagonal $H_{j,j} = \sum_{k=1}^n X_{k,j}^2$ serves as a sensitivity indicator.
This is because the presence of an outlier in column $j$ directly increases $H_{j,j}$, amplifying the column's sensitivity.
Thus, $H_{j,j}$ provides a unified metric for quantization sensitivity for each column $j$ across both weights and activations.
To extract these sensitivity scores in practice, we propose Algorithm~\ref{alg:col_sens}, which uses a calibration dataset to accumulate activation statistics across all linear layers in the model. 

\subsection{Layer-wise Sensitivity}
\label{sec:layersens}

Layer-wise sensitivity is another critical factor, as layers exhibit varying sensitivity to quantization under identical bitwidth allocation.
While most quantization methods apply uniform bitwidth or make only coarse adjustments across layers, our analysis shows that certain layers are more prone to accuracy degradation under low-precision quantization.
This variation in sensitivity motivates a more precise bitwidth allocation strategy that accounts for each layer's impact on accuracy and quantization robustness.

To capture this variability, we quantify layer-wise sensitivity by measuring the $L_2$ error at the model’s final output when quantization is applied to a single layer in isolation.  Algorithm~\ref{alg:layer_sens} in Appendix~\ref{app:algo} explains how we extract sensitivites in detail.
Formally, a linear layer is considered more sensitive if it produces a larger error in the model’s final output when quantized under the same bitwidth configuration.
Quantization error propagation depends on a layer’s position in the network.
As shown in Figure~\ref{fig:propagation}, quantization errors introduced in earlier layers of the LLaMA-2-7B model~\citep{llama3_8b_model} compound as they propagate through the network, amplifying their effect on the final output.
In contrast, errors in later layers tend to remain more localized. 
This position-dependent amplification leads to systematically higher sensitivity in early layers across all linear layer types (Same holds for LLaMA-3-8B, and Mistral-7B-v0.3, while middle layers are more sensitive for Qwen1.5-7B as shown in Appendix~\ref{app:propagation}).
These findings support a sensitivity-aware approach to mixed-precision quantization, where bitwidths are assigned in proportion to a layer’s quantization impact.

\begin{figure}[ht]
    \centering
    \begin{minipage}[t]{0.47\textwidth}
        \centering
        \includegraphics[width=\linewidth]{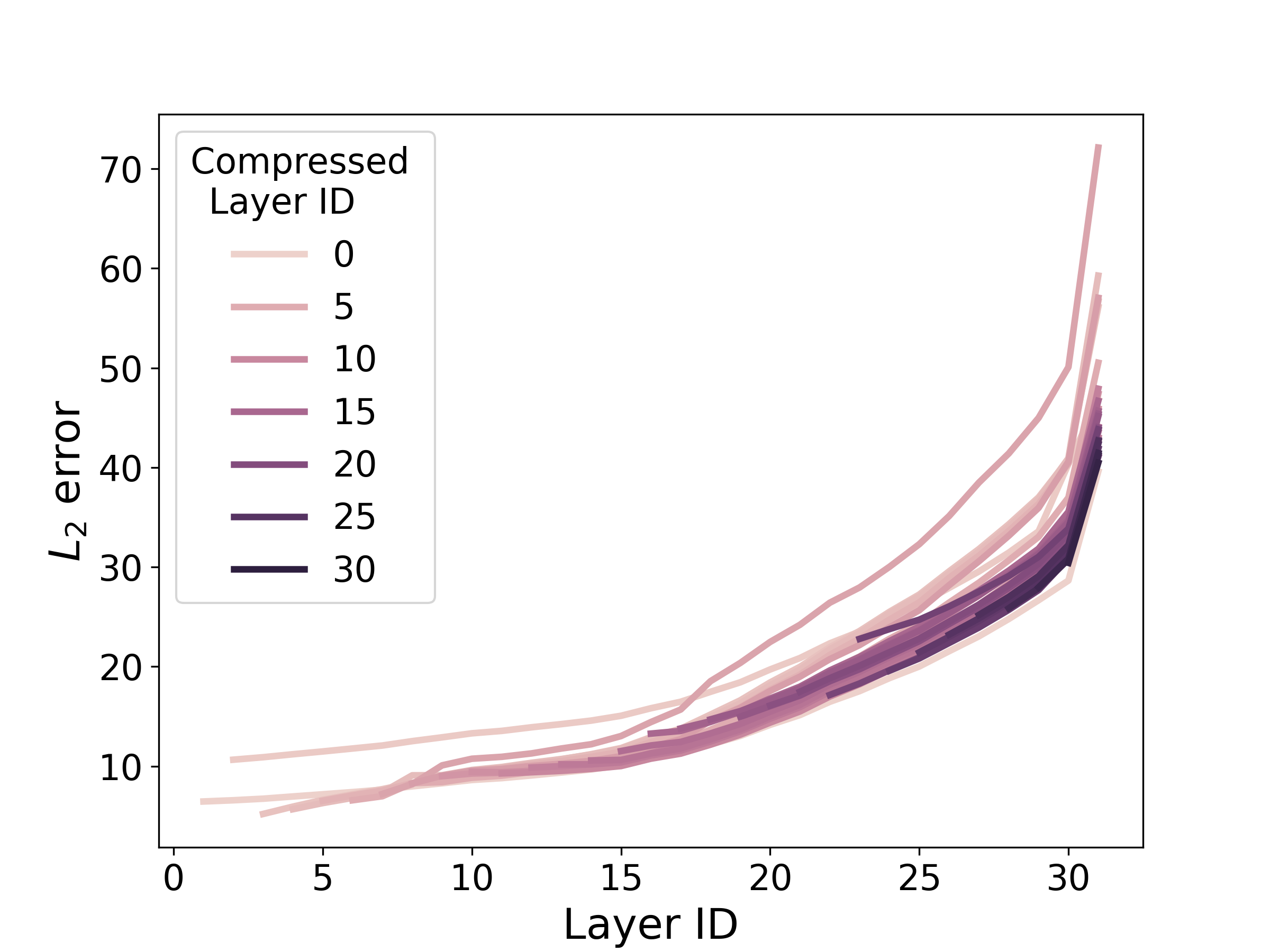}
        \caption{MXINT4 error propagation in q-proj for single-layer quantization on LLaMA-2-7B.}
        \label{fig:propagation}
    \end{minipage}%
    \hfill
    \begin{minipage}[t]{0.47\textwidth}
        \centering
        \includegraphics[width=\linewidth]{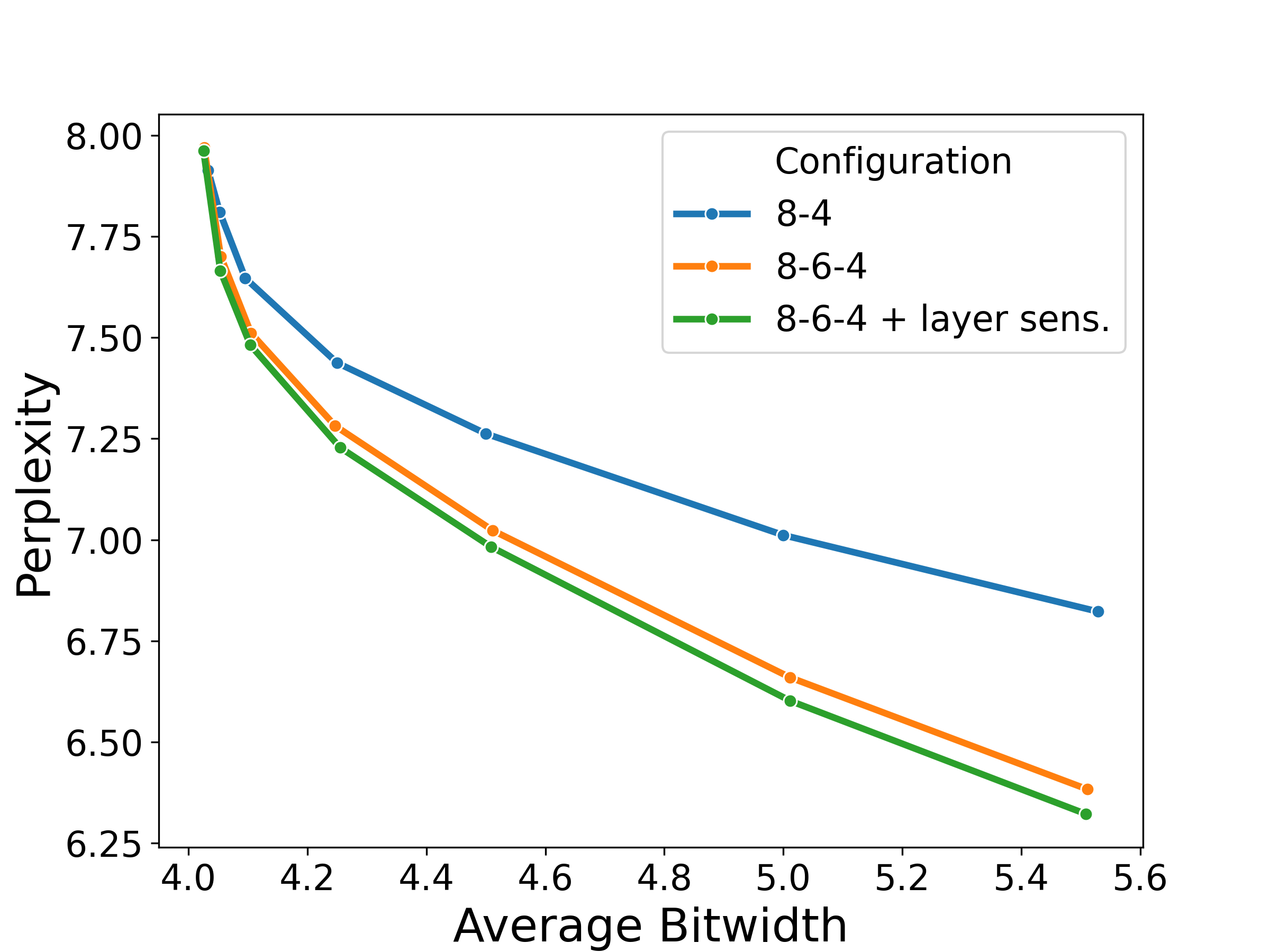}
    \caption{Perplexity dynamics of bitwidth allocation strategies on Llama-3-8B.}
    \label{fig:8-6-4-sens}
    \end{minipage}
\end{figure}
















\section{MXSens: Sensitivity-Guided Triplet Quantization}
In this section, we introduce \textit{MXSens}, a novel quantization method that selectively assigns bitwidths to columns and layers based on their sensitivity to quantization.
Unlike prior methods that apply uniform or layer-wise fixed precision, MXSens jointly leverages column- and layer-wise sensitivity to allocate three precision levels—4, 6, and 8 bits—optimally under a given average bitwidth budget.
The remainder of this section outlines our method step by step, detailing the motivation behind each design choice.

Quantizing all layers and columns uniformly overlooks the substantial variation in quantization sensitivity, as shown in Section~\ref{sec:sens}. 
This sensitivity stems from the presence of both \textit{extreme} and \textit{mild outliers}: Extreme outliers are rare but high-magnitude values~\citep{dettmers_llmint8_2022, lin_duquant_2024, yi_rrs_2025} that dominate the dynamic range within a block and require high precision, e.g., 8 bits, to avoid severe quantization error. 
Mild outliers are more frequent and have smaller magnitudes, but they still cause non-negligible degradation when quantized to low bitwidths like MXINT4.
Assigning 8-bit precision to the most sensitive columns, with extreme outliers, significantly improves accuracy. 
However, extending 8-bit allocation to additional columns yields diminishing returns, as only a small fraction of columns actually contain such extreme values. 
A broader subset of columns, exhibiting moderate sensitivity due to mild outliers, benefits more from intermediate precision, such as 6 bits, enabling more efficient use of the overall bitwidth.

\subsection{Triplet Quantization}
\label{sec:few_outliers}

To address the full spectrum of sensitivity across columns, we propose triplet quantization, which assigns three precision levels—{4, 6, 8} bits—based on quantization sensitivity. 
We observe that in each layer, the number of columns containing extreme outliers is typically smaller than the block size of MXINT (32). 
This makes MXINT particularly well-suited for a mixed-mantissa strategy: we assign 8-bit precision to the top 32 most sensitive columns in each layer, ensuring that all extreme outliers are captured within one high-precision block per layer. 
This approach aligns with MXINT’s block-wise constraints, where each block must use a single mantissa bitwidth to ensure full utilization and efficient hardware execution. 
Mild outliers are assigned 6-bit mantissas, again in full blocks, while the rest default to 4 bits.

Figure~\ref{fig:8-6-4-sens} shows that triplet quantization (4-6-8) consistently outperforms the two-precision method (4-8) across a wide range of average bitwidths. 
The blue curve corresponds to a baseline scheme where the 32 most sensitive columns per layer are assigned 8-bit precision, and all remaining columns default to 4 bits. 
In contrast, the orange curve illustrates our proposed strategy: the top 32 columns are assigned 8 bits, while the remaining columns are divided between 6 and 4 bits based on column sensitivity and average bitwidth constraints. 
This finer-grained allocation improves perplexity at similar compression levels, highlighting the effectiveness of including an intermediate bitwidth for mild outliers. 
Moreover, as shown in Appendix~\ref{app:8bit-necessity}, columns with extreme outliers consistently require 8-bit precision to prevent significant accuracy loss, reinforcing the need for dedicated high-precision blocks in each layer.

\subsection{Combining Column- and Layer-wise Sensitivity}

While triplet quantization effectively captures column-wise sensitivity by introducing an intermediate 6-bit level, it does not account for global variation across layers. 
In practice, layers exhibit a wide range of sensitivity to quantization, as shown in Section~\ref{sec:layersens}. 
To achieve a globally balanced bitwidth allocation that respects both local (column) and global (layer) sensitivity, we introduce the MXSens method. 
MXSens extends triplet quantization by jointly considering both column-wise and layer-wise sensitivity to assign bitwidths under a target average precision budget. 
The method retains the use of 8-bit mantissas for the 32 most sensitive columns in each layer, capturing all extreme outliers while aligning with MXINT’s block size. 
For the remaining columns, MXSens distributes 6- and 4-bit precision in a layer-aware fashion: layers with higher overall sensitivity are assigned more 6-bit columns, while less sensitive layers receive fewer.

\begin{algorithm}
\caption{Sensitivity-Based Bitwidth Allocation}
\label{alg:bit_alloc}
\begin{algorithmic}[1]
\Require Set of column sensitivities $S_{col} = \{s_1, s_2, \dots\}$, layer sensitivity vector $S_{layer}$, average bit-width $B$.
\Ensure Map of column indices $\mathcal{B}_{indices}$ for each bitwidth per layer.
\State $\mathcal{B}_{indices} \leftarrow \emptyset$
\State $d_{in}^{total} \leftarrow \sum_i^n \text{num\_columns}(L_i)$ \Comment{Calculate total number of columns in all linear layers}
\State $R \leftarrow \frac{(B - 4) \cdot \sum_{i = 1}^n d_{in}^i - 4 \cdot 32 \cdot n }{\sum_{i = 1}^n 2 \cdot S_{layer}^i \cdot (d_{in}^i - 32)}$  \Comment{Calculate the control parameter for average bitwidth is $B$}

\ForAll{linear layer $L_i \in \mathcal{M}$}
    \State $n_i \leftarrow \text{num\_columns}(L_i)$
    \State $I \leftarrow \text{argsort}(S_{col}[i], \text{descending=True})$ \Comment{Get column indices sorted by sensitivity}
    \State $I_{8bit} \leftarrow I[0:32]$ \Comment{Assign 8 bits to top 32 sensitive columns}
    \State $N_{6bit} \leftarrow \lfloor R \cdot S_{layer}[i] \cdot (d_{in}^i - 32)\rfloor$ \Comment{Calculate number of 6-bit columns}
    \State $I_{6bit} \leftarrow I[32 : 32 + N_{6bit}]$ \Comment{Assign 6 bits to next $N_{6bit}$ columns}
    \State $I_{4bit} \leftarrow I[32 + N_{6bit} : ]$ \Comment{Assign 4 bits to the rest}
    \State $\mathcal{B}_{indices}[i] \leftarrow (I_{8bit}, I_{6bit}, I_{4bit})$ \Comment{Store the index lists for the current layer}
\EndFor
\State \Return $\mathcal{B}_{indices}$
\end{algorithmic}
\end{algorithm}

Algorithm~\ref{alg:bit_alloc} summarizes this strategy.
For each layer, columns are first sorted by sensitivity.
The top 32 are assigned 8-bit mantissas. 
Next, the number of 6-bit columns is computed in proportion to the layer’s sensitivity score, scaled by a global factor $R$ to meet the target average bitwidth $B$. 
The remaining columns are assigned 4-bit mantissas. 
This approach yields a globally optimized bitwidth allocation that aligns precision with the sensitivity structure of the model, improving both accuracy and hardware overhead. 
Theorem~\ref{theo:rparam} in Appendix~\ref{app:algo} shows how $R$ is calculated in a way to ensure the average bitwidth constraint $B$ is satisfied.


    















\section{Experimental Results}
\label{sec:expsetup} 

\paragraph{Experimental Setup.} We evaluate MXSens across a large range of LLMs and datasets.
Our experiments cover models from the LLaMA family~\citep{touvron_llama2_2023}
(LLaMA-2-7B, LLaMA-2-13B, LLaMA-2-70B, LLaMA-3-8B, LLaMA-3.1-8B),
as well as Qwen1.5-7B~\citep{yang2024qwen2technicalreport}, Mistral-7B~\citep{jiang_mistral_2023} and Gemma-2-9B~\citep{mesnard_gemma_2024}. For language modeling tasks, we report perplexities (PPL) on WikiText-2~\citep{wikitext_dataset}; for commonsense reasoning tasks, we report 0-shot accuracies on: Arc-Easy~\citep{clark_arc_2018}, Arc-Challenge~\citep{clark_arc_2018}, Winogrande~\citep{ai2_winogrande_2019}, PiQA~\citep{bisk_piqa_2020}, BoolQ~\citep{clark_boolq_2019}, HellaSwag~\citep{zellers2019hellaswagmachinereallyfinish}, OpenBookQA~\citep{mihaylov2018suitarmorconductelectricity}, and MMLU~\citep{hendrycks2021measuringmassivemultitasklanguage}.
For sensitivity analysis, we use 128 randomly sampled examples from the C4 dataset~\citep{colin_c4_2019}.
All experiments are conducted without any retraining or fine-tuning.
We compare MXSens with SOTA post-training quantization methods under matched average bitwidths: Two of them are rotation-based, Quarot~\citep{ashkboos_quarot_2024} and RRS~\citep{yi_rrs_2025}, and two of them are mixed-precision methods, Atom~\citep{zhao_atom_2024} and QUIK~\citep{ashkboos_quik_2024}.
Unless otherwise specified, we use the MXINT format and apply quantization to weights, activations, and key-value (KV) caches.
In particular, we enforce average bitwidth targets of 4.02-4.03 using the 4-6-8 MXSens configuration, and evaluate all methods under both W4A4KV16 and W4A4KV4 constraints. We provide further details about our experimental setup in Appendix~\ref{app:expdet}.

\subsection{MXSens Results}

We evaluate MXSens across a diverse set of models and tasks, and compare its performance to SOTA quantization baselines at matched average bitwidths. For MXSens, we account for the 8-bit exponent overhead shared across 32-element blocks by adding 0.25 bits per element to the average bitwidth. As shown in Tables~\ref{tab:mpcomp} and~\ref{tab:commonsense}, MXSens consistently outperforms rotation-based methods (QuaRot, RRS) and mixed-precision strategies (Atom, QUIK), achieving higher 0-shot accuracy and lower PPL across a range of quantization settings and model scales. The reported baseline numbers are taken from their respective papers.

\begin{table}[!ht]
    \scriptsize
    \renewcommand\arraystretch{1}
    \centering
    \caption{$0$-shot accuracy (\%) on the Common Sense QA tasks and WikiText-2 perplexity.}
    \vspace{0.5em}
    \label{tab:mpcomp}
    \resizebox{0.92\textwidth}{!}{%
    \begin{tabular}{c|l|c|c:c:c:c:c|c|c}
    \noalign{\vspace{0.1em}}\hline\noalign{\vspace{0.1em}}
    \hline\noalign{\vspace{0.2em}}
    \textbf{Model} & \textbf{Method} & \textbf{Avg. Bits} & \textbf{PQ} & \textbf{ARC-e} & \textbf{ARC-c} & \textbf{HS} & \textbf{WG} & \textbf{Avg.} & \textbf{PPL} \\ \hline
    \multirow{4}{*}{LLaMA-2-13B} & QuaRot             & 4.00    & 78.9 & 72.9 & 46.6 & 76.4 & 70.2 & 69.0 & 5.40 \\
                                 & Atom          & 4.25 & 76.5 & 57.5 & 42.3 & 73.8 & 67.4 & 63.5  & 5.31 \\
                                 & QUIK               & 4.50  & 79.2 & 74.9 & 48.0 & 78.4 & 71.9 & 70.5 & 5.28 \\
                                 & \HL{MXSens} &  \HL4.27 & \HL78.9 & \HL77.3 & \HL47.9 & \HL77.5 & \HL69.8 & \HL70.3 & \HL5.32 \\
                                 & \HL{MXSens} &  \HL4.50 & \HL79.4 & \HL76.5 & \HL48.6 & \HL78.4 & \HL71.3 & \HL70.8 & \HL5.09 \\
    \noalign{\vspace{0.1em}} \cdashline{1-10} \noalign{\vspace{0.1em}}
    \multirow{4}{*}{LLaMA-2-70B} & QuaRot             & 4.00    & 82.4 & 80.4 & 56.2 & 81.8 & 76.2 & 75.4  & 3.79 \\
                                 & Atom          & 4.25 & 79.9 & 58.3 & 46.1 & 79.1 & 74.3 & 67.5 & 3.73 \\
                                 & QUIK               & 4.50  & 81.6 & 79.0 & 56.1 & 81.6 & 76.6 & 75.0 & 3.74 \\
                                 & \HL{MXSens} &  \HL4.27 & \HL82.2 & \HL81.8 & \HL55.8 & \HL82.6 & \HL77.0 & \HL75.9 & \HL3.77 \\
                                 & \HL{MXSens} &  \HL4.50 & \HL82.0 & \HL81.1 & \HL57.6 & \HL83.0 & \HL76.5 & \HL76.0 & \HL3.58 \\
    \hline
    \end{tabular}
    }
\end{table}

\begin{table}[!t]
    \scriptsize
    \renewcommand\arraystretch{1}
    \centering
    \caption{$0$-shot accuracy (\%) on the Common Sense QA tasks.} 
    \vspace{-1em}
    \label{tab:commonsense}
    \resizebox{0.8\textwidth}{!}{%
    \begin{tabular}{
        c|c|l|c:c:c:c|c|c}
     & & & & & & &\\
    \noalign{\vspace{0.1em}}\hline\noalign{\vspace{0.1em}}
    \hline\noalign{\vspace{0.2em}}
    \textbf{\#Bits} & \textbf{Model} & \textbf{Method} & \textbf{OBQA} & \textbf{BoolQ} & \textbf{ARC-E} & \textbf{ARC-C} & \textbf{Avg.} & \textbf{PPL} \\ \hline
     &  & QuaRot & 39.2 & 70.7 & 68.6 & 41.5 & 55.0 & 8.38\\
     &  & RRS & \textbf{42.4} & 73.6 & 71.1 & 44.8 & 58.0 & 8.11\\
     & \multirow{-4}{*}{LLaMA-3-8B} & \HL\textbf{MXSens} & \HL42.2 & \HL\textbf{77.0} & \HL\textbf{75.7} & \HL\textbf{47.6} & \HL\textbf{60.6}  & \HL\textbf{7.63}\\
    \noalign{\vspace{0.1em}} \cdashline{2-9} \noalign{\vspace{0.1em}}
     &  & QuaRot & \textbf{44.4} & 83.1 & 71.9 & 54.4 & 63.4 & 6.38\\
     &  & RRS & 43.4 & \textbf{85.1} & 74.4 & \textbf{55.4} & \textbf{64.6} & 6.31\\
     & \multirow{-4}{*}{Mistral} & \HL\textbf{MXSens} & \HL42.2 & \HL81.5 & \HL\textbf{78.2} & \HL49.2  & \HL62.8  & \HL\textbf{5.77 }\\
    \noalign{\vspace{0.1em}} \cdashline{2-9} \noalign{\vspace{0.1em}} 
     &  & QuaRot & 39.0 & 73.9 & 61.3 & 40.4 & 53.7 & 9.34\\
     &  & RRS & \textbf{43.0} & 77.7 & 61.5 & 42.0 & 56.0 & \textbf{9.17}\\
    \multirow{-12}{*}{W4A4KV16} & \multirow{-4}{*}{Qwen1.5-7B} & \HL\textbf{MXSens} & \HL40.8 & \HL\textbf{78.9} & \HL\textbf{67.8} & \HL\textbf{42.6} & \HL\textbf{57.5}  & \HL{9.73} \\ \hline

     &  & QuaRot & 38.6 & 70.6 & 66.7 & 40.4 & 54.1 & 8.76\\
     &  & RRS & \textbf{42.4} & 73.5 & 68.7 & \textbf{44.7} & \textbf{57.3} & 8.42\\
     & \multirow{-4}{*}{LLaMA-3-8B} & \HL\textbf{MXSens} & \HL42.2 & \HL\textbf{74.5} & \HL\textbf{69.7} & \HL42.7  & \HL\textbf{57.3}  & \HL\textbf{7.99}\\
    \noalign{\vspace{0.1em}} \cdashline{2-9} \noalign{\vspace{0.1em}}
     &  & QuaRot & 43.2 & 83.0 & 72.8 & 52.5 & 62.9 & 6.45\\
     &  & RRS & \textbf{45.6} & \textbf{84.3} & 73.2 & \textbf{54.6} & \textbf{64.4} & 6.35\\
     & \multirow{-4}{*}{Mistral} & \HL\textbf{MXSens} & \HL42.6 & \HL81.8 & \HL\textbf{77.9} & \HL49.8 & \HL63.0  & \HL\textbf{6.08}\\
    \noalign{\vspace{0.1em}} \cdashline{2-9} \noalign{\vspace{0.1em}} 
     &  & QuaRot & 39.6 & 74.3 & 62.1 & 42.0 & 54.5 & 9.55\\
     &  & RRS & 40.4 & 76.6 & 61.8 & \textbf{42.5} & 55.3 & \textbf{9.37}\\
    \multirow{-12}{*}{W4A4KV4} & \multirow{-4}{*}{Qwen1.5-7B} & \HL\textbf{MXSens} & \HL\textbf{40.6} & \HL\textbf{78.0} & \HL\textbf{65.4} & \HL42.3 & \HL\textbf{56.6}  & \HL9.95\\ \hline
    \end{tabular}
    }
\end{table}

Table~\ref{tab:mpcomp} shows that MXSens consistently outperforms all baselines on both LLaMA-2-13B and LLaMA-2-70B, across accuracy and perplexity (PPL) metrics, under matched average bitwidths. In all configurations, weights, activations, and KV cache are quantized under the same bitwidth budget. On LLaMA-2-13B at $\sim$4.25 bits, MXSens matches Atom in WikiText-2 PPL (5.32 vs. 5.31) but achieves significantly higher 0-shot accuracy on Common Sense QA tasks (70.3\% vs. 63.5\%). While QUIK operates at a higher average bitwidth (4.5 bits), MXSens not only achieves comparable accuracy at 4.27 bits, but outperforms QUIK in both accuracy and PPL when evaluated at 4.5 bits, demonstrating better compression-efficiency trade-offs.
While QuaRot achieves 69.0\% accuracy on LLaMA-2-13B, it is the closest competitor to MXSens on LLaMA-2-70B; however, MXSens still surpasses it in both accuracy (77.0\% vs. 76.2\%) and PPL at 4.28 bits.

Table~\ref{tab:commonsense} reports zero-shot accuracy for three models (LLaMA-3-8B, LLaMA-3.1-8B, and Mistral) on four commonsense reasoning benchmarks: OpenBookQA (OBQA), BoolQ, ARC-Easy, and ARC-Challenge; and perplexity on WikiText-2. MXSens outperforms RRS and QuaRot on the majority of tasks, demonstrating strong generalization beyond language modeling.
Similar to its perplexity performance, MXSens enables LLaMA-3-8B to achieve higher accuracy compared to both baselines across all evaluated tasks. For Mistral-7B, MXSens underperforms QuaRot and RRS on three tasks, but surpasses both baselines on ARC-E accuracy and WikiText-2 perplexity. We report additional results on other models and benchmarks in the Appendix.

\begin{table}[!t]
\scriptsize
    \renewcommand\arraystretch{1.2}
    \centering
    \caption{Ablation study: the effect of various average bitwidth configurations for Llama2-7B.}
    \label{tab:llama2_7b}
    \vspace{0.5em}
    \resizebox{0.8\textwidth}{!}{%
       \begin{tabular}{
        c|c
        |c:c:c:c:c:c:c|c|c}
    \noalign{\vspace{0.1em}}\hline\noalign{\vspace{0.1em}}
    \hline\noalign{\vspace{0.2em}}
    \textbf{Precisions} & \textbf{Avg BW} & \textbf{MMLU} & \textbf{A-c} & \textbf{A-e} & \textbf{WG} & \textbf{PQ} & \textbf{BQ} & \textbf{HS} & \textbf{Avg Acc} & \textbf{PPL}  \\ \hline
    \multirow{5}{*}{\makecell{MXINT3\\ + \\MXINT6\\ + \\MXINT8}} 
        & 3.28 & 26.9 & 34.7 & 55.6 & 56.4 & 46.0 & 70.9 & 62.6 & 55.5 & 9.37 \\
        & 3.35 & 27.2 & 35.4 & 61.0 & 60.1 & 47.6 & 71.5 & 63.9 & 57.3 & 8.31 \\
        & 3.50 & 27.0 & 36.8 & 62.0 & 62.0 & 48.0 & 72.2 & 65.9 & 58.8 & 7.76 \\
        & 3.75 & 28.3 & 37.7 & 63.8 & 62.5 & 50.0 & 73.2 & 67.3 & 60.1 & 7.30 \\
        & 4.00 & 28.7 & 39.7 & 66.1 & 64.2 & 50.6 & 73.7 & 69.0 & 60.5 & 6.91 \\ \hline
    \multirow{5}{*}{\makecell{MXINT4\\ + \\MXINT6\\ + \\MXINT8}} 
        &  4.27 & 36.7 & 43.3 & 70.7 & 66.5 & 76.8 & 73.9 & 74.1 & 67.6 & 5.98 \\
        & 4.35 & 37.3 & 44.5 & 72.1 & 68.3 & 76.8 & 74.4 & 73.7 & 68.3 & 5.90 \\
        & 4.75 & 38.7 & 44.2 & 72.4 & 66.8 & 77.1 & 75.0 & 74.1 & 68.3 & 5.76 \\
        & 5.25 & 40.3 & 45.3 & 73.0 & 68.4 & 77.0 & 74.9 & 75.0 & 68.9 & 5.64 \\
        & 5.75 & 40.8 & 45.3 & 73.5 & 69.3 & 77.6 & 77.0 & 75.5 & 69.7 & 5.49 \\ \hline
    \multirow{1}{*}{FP16} 
        & 16.00 & 41.9 & 46.2 & 74.5 & 69.3 & 78.1 & 77.8 & 76.0 & 70.3 & 5.42 \\ 
    \end{tabular}
    }
\end{table}

\subsection{Ablation Study on the Average Number of Bits}
Table~\ref{tab:llama2_7b} examines how LLaMA-2-7B's zero-shot accuracy on seven diverse benchmarks and its WikiText-2 perplexity change as we vary the average bitwidth under two mixed-precision templates, ``3-6-8'' and ``4-6-8.''
As the average bitwidth increases from roughly 3.28 to 4 bits in the 3-6-8 configuration, overall accuracy climbs steadily from 55.5\% to 60.5\%. The higher-floor 4-6-8 template exhibits a similar trend.
These gains show that each additional bit can contribute meaningfully to model quality across different bit-budget regimes.
At 5.75 bits, the accuracy gap narrows, achieving 69.7\% accuracy, only 0.6 points below FP16, showing that sensitivity-guided mixed precision can almost fully match full-precision quality at just one-third of the bit cost.
Together, these results demonstrate MXSens’s flexibility across a wide range of bitwidth budgets.

\section{Hardware Implications}
\label{sec:hw}

Our method quantizes weights, activations, and KV cache using the MXINT format~\citep{drumond_hbfp_2018,rouhani_microscaling_2023}, which is designed for dense execution in hardware. Emerging commercial GPUs and neural processing units (NPUs) such as NVIDIA Blackwell and Qualcomm AI100 have native support for microscaling formats.
Prior work~\citep{drumond2021equinox, dongho2025avant} has also demonstrated that hardware accelerators with native support for microscaling formats can deliver the same training and inference throughput as integer arithmetic.

There are multiple ways to support MXSens' mixed-mantissa computation on commercial GPUs and NPUs. First, several proposals~\citep{zhang_fast_2022, noh_flexblock_2023} use custom hardware units to support multiple mantissa bitwidths in the same data-path, enabling no loss in throughput or latency. Second, commercial GPUs typically support multiple precisions of the same data type (e.g., INT4 and INT8 support in tensor cores). Similar to prior work~\citep{dettmers_llmint8_2022, zhao_atom_2024, ramachandran_microscopiq_2025} that decomposes a tensor into high-precision and low-precision tensors and multiplies them concurrently, MXSens can also be implemented similarly with minimal latency or throughput overhead. 
Third, even on hardware with a fixed-precision 4-bit MXINT datapath, there are proposals~\citep{dongho2025avant, harma_accuracy_2022} to emulate high-precision arithmetic using the low-precision arithmetic units. While each high-precision operation takes 4$\times$ more cycles than a low-precision operation, the relatively small number of these operations results in a minimal latency overhead. At a bitwidth budget of 4.03, only 1\% of all arithmetic operations require higher precision, resulting in an estimated 3\% latency overhead. One of our key contributions is maintaining model performance within a 4.03-bit budget, avoiding the high latency of larger bitwidths.

MXSens does increase memory usage, but enforcing a bitwidth budget of 4.03 bits limits the increase to less than 1\%. Overall, implementing MXSens in commercial GPUs and accelerators is relatively straightforward. The performance impact of using variable-bitwidth activations is marginal, while enabling significant accuracy gains. Our design leverages the fact that most activations and weights can be quantized to four bits, with varying levels of higher precision reserved for columns identified as sensitive--achieving an optimal trade-off between accuracy and hardware efficiency.

\section{Conclusion}
We introduce MXSens, a sensitivity-guided, mixed-precision quantization framework for LLMs that delivers state-of-the-art performance without requiring architectural changes or retraining. 
By leveraging Hessian-guided fine-grained sensitivity analysis across both layers and columns, MXSens effectively allocates 8-, 6-, and 4-bit precision to balance compression and accuracy.
Our extensive experiments show that MXSens consistently outperforms existing methods on both language modeling and commonsense reasoning tasks across a wide range of models. 
Our results underscore the value of sensitivity-aware quantization in pushing the boundaries of efficient LLM inference.

\section{Reproducibility Statement}
We have made extensive efforts to ensure the reproducibility of our results. Experimental setup details, including models, datasets, and evaluation metrics; as well as all implementation details of MXSens, including sensitivity computation, bitwidth allocation, and MXINT quantization routines, are provided in Section~\ref{sec:expsetup} and Appendix~\ref{app:expdet}. Pseudocodes for the column-wise sensitivity extraction, layer-wise sensitivity extraction, and triplet quantization algorithm are included in Algorithms~\ref{alg:col_sens},~\ref{alg:layer_sens},~\ref{alg:bit_alloc}, respectively. Hardware assumptions for MXINT execution are discussed in Section~\ref{sec:hw}.

\newpage

\bibliography{iclr2026_conference}
\bibliographystyle{iclr2026_conference}
\newpage

\appendix
\section{Appendix}

\subsection{Language Model Usage in the Paper}
Language models were used to improve writing clarity, correct grammar, and typos, and ensure compliance with the ICLR author guidelines. Apart from their use in benchmark evaluations, language models were not involved in any other part of this work.

\subsection{Experimental details}
\label{app:expdet}

We use the \texttt{lm-evaluation-harness} framework~\citep{gao_lm_eval_harness} to evaluate commonsense reasoning tasks and adapt the perplexity evaluation code from the QuaRot repository~\citep{ashkboos_quarot_2024}. Following prior work~\citep{yi_rrs_2025, zhao_atom_2024}, we report \texttt{acc\_norm} for tasks where it is available, and standard \texttt{acc} otherwise.
We emulate the MXINT numerical format using PyTorch~\citep{paszke_torch_2017} and model implementations are based on the \texttt{transformers} library~\citep{wolf_transformers_2020}.

All experiments are conducted on a single NVIDIA A100 GPU~\citep{nvidia_a100whitepaper_2022}, except for LLaMA-2-70B, which requires multiple GPUs due to its memory demands.
We quantize the weights and activations in all linear layers within the decoder, including both MLP and attention layers.
For KV-cache quantization,  we apply a 4-8 bitwidth configuration with an average bitwidth fixed at 4.03, as these components are not significantly affected by mild outliers.
All remaining layers, including Softmax, LayerNorm, embedding layers, and the \texttt{lm\_head} layer, are kept in FP16 format.

\subsection{Formal definition of bitwidth allocation}
\label{app:algo}


\begin{algorithm}
\caption{Layer-Wise Sensitivity Extraction}
\label{alg:layer_sens}
\begin{algorithmic}[1]
\Require Calibration dataset $\mathcal{D}_{calib}$, Model $\mathcal{M}$
\Ensure Normalized layer-wise sensitivities $S_{layer}$ for all layers in $\mathcal{M}$
\State $E \leftarrow []$ \Comment{Initialize list of errors}
\State $X_{fp32} \leftarrow \mathcal{M}(\mathcal{D}_{calib})$ \Comment{Get final activations from original FP32 model}
\ForAll{linear layer $L_i \in \mathcal{M}$}
    \State $\widetilde{\mathcal{M}_i} \leftarrow \text{quantize\_layer}(\mathcal{M}, L_i)$ \Comment{Quantize only layer $L_i$}
    \State $\widetilde{X_{i}} \leftarrow \widetilde{\mathcal{M}_i}(\mathcal{D}_{calib})$ \Comment{Get final activations from modified model}
    \State $e_i \leftarrow \|X_{fp32} - \widetilde{X_{i}}\|_2^2$ \Comment{Compute L2 error at the output}
    \State $E \leftarrow E \;\cup\; [e_i] $
\EndFor
\State $e_{max} \leftarrow \max(E)$ \Comment{Find maximum error for normalization}
\State $S_{layer} \leftarrow [e_i / e_{max} \text{ for } e_i \in E]$ \Comment{Scale errors to a $[0, 1]$ range}
\State \Return $S_{layer}$
\end{algorithmic}
\end{algorithm}



\begin{theorem}
\label{theo:rparam}
Let $B$ be the average bitwidth, $n$ - number of layers, $d_{in}^i$ - input dimention of the $i$-th linear layer, $S_{layer}^i \in[0, 1]$ - sensitivity of the $i$-th layer. Consider a scenario of triplet allocation with $8$, $6$ and $4$ bits, where $8$ bits are allocated to the top $32$ sensitive columns, and $6$ and $4$ bits are allocated to the rest. Let $R$ be the allocation control parameter that affects the ratio between the number of $6$-bit columns and $4$-bit columns in the layer. Then, to achieve the average bitwidth $B$, the allocation control parameter can be calculated using the following formula:
\begin{equation}
    R = \frac{(B - 4) \cdot \sum_{i = 1}^n d_{in}^i - 4 \cdot 32 \cdot n }{\sum_{i = 1}^n 2 \cdot S_{layer}^i \cdot (d_{in}^i - 32)}
\end{equation}
\end{theorem}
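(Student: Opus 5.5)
The plan is a direct bit-counting argument. Average bitwidth means total mantissa bits over all columns of all linear layers, divided by the total number of columns $\sum_{i=1}^n d_{in}^i$. The exponent overhead of $0.25$ bits per element is common to all blocks, so I treat it separately, as in the experiments. I will write the bit budget of each layer in terms of $R$, sum over layers, and solve the resulting linear equation for $R$.

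\textbf{Per-layer count.} Following the allocation in Algorithm~\ref{alg:bit_alloc}, layer $i$ has $32$ columns at $8$ bits. I model the number of $6$-bit columns, before the floor, as $N_6^i = R\, S_{layer}^i\,(d_{in}^i-32)$. The remaining $d_{in}^i - 32 - N_6^i$ columns get $4$ bits. The total bits in layer $i$ are then
\begin{equation*}
8\cdot 32 + 6N_6^i + 4\bigl(d_{in}^i - 32 - N_6^i\bigr) = 4d_{in}^i + 4\cdot 32 + 2N_6^i .
\end{equation*}

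\textbf{Summing and solving.} The constraint is
\begin{equation*}
\sum_{i=1}^n \bigl(4d_{in}^i + 4\cdot 32 + 2R\,S_{layer}^i(d_{in}^i-32)\bigr) = B\sum_{i=1}^n d_{in}^i .
\end{equation*}
This is affine in $R$. Moving the terms without $R$ to the right gives
\begin{equation*}
2R\sum_{i=1}^n S_{layer}^i(d_{in}^i-32) = (B-4)\sum_{i=1}^n d_{in}^i - 4\cdot 32\cdot n .
\end{equation*}
Dividing by the coefficient of $R$ yields the stated formula. The division is valid provided the denominator is nonzero. This holds because $S_{layer}$ is normalized by its maximum, so some layer has $S_{layer}^i = 1$, and every $d_{in}^i > 32$.

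\textbf{Main obstacle: rigor around the idealization.} The algorithm takes $\lfloor R\,S_{layer}^i(d_{in}^i-32)\rfloor$, not the real value. I would state the theorem for the unrounded counts and then bound the effect of rounding. Each floor loses less than one $6$-bit column, i.e.\ at most $2$ bits per layer, so the achieved average lies in
\begin{equation*}
\Bigl[\,B - \frac{2n}{\sum_i d_{in}^i},\; B\,\Bigr].
\end{equation*}
This error is negligible at LLM dimensions.

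I would also note the feasibility conditions needed for the allocation to make sense: $0 \le R\,S_{layer}^i \le 1$ for all $i$. Equivalently, since $\max_i S_{layer}^i = 1$, we need $0 \le R \le 1$, which restricts the admissible range of $B$. These conditions guarantee that the counts $N_6^i$ are legitimate, i.e.\ nonnegative and at most $d_{in}^i-32$.
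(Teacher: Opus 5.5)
Your proposal is correct and is essentially the paper's argument. The paper likewise counts, in each layer, $8\cdot 32$ bits plus $6$ bits for $R\,S_{layer}^i(d_{in}^i-32)$ unrounded columns plus $4$ bits for the rest, sums over layers, divides by $\sum_{i} d_{in}^i$, and solves the resulting affine equation for $R$. Your extra remarks are not in the paper's proof but are all correct and make the idealization explicit: the positive denominator, the floor-induced deficit of less than $2n/\sum_i d_{in}^i$ in the average, and the feasibility range $0\le R\le 1$.
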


\begin{proof}
Average bitwidth $B$ can be calculated using the following formula:
\begin{align}
    B &= \frac{\sum_{i=1}^n (8 \cdot \overbrace{32}^{\text{\# of 8-bit}} + 6 \cdot \overbrace{(R \cdot S_{layer}^i) \cdot(d_{in}^i - 32)}^{\text{\# of 6-bit}} + 4 \cdot \overbrace{(1 - R \cdot S_{layer}^i) \cdot(d_{in}^i - 32))}^{\text{\# of 4-bit}}}{\sum_{i = 1}^n d_{in}^i} \\
    &= \frac{8 \cdot 32 \cdot n + \sum_{i=1}^n (2 \cdot R \cdot S_{layer}^i + 4)\cdot(d_{in}^i - 32)}{\sum_{i = 1}^n d_{in}^i}
\end{align}
From this formula, we can deduce the control parameter $R$:
\begin{align}
    \frac{8 \cdot 32 \cdot n + \sum_{i=1}^n (2 \cdot R \cdot S_{layer}^i + 4)\cdot(d_{in}^i - 32)}{\sum_{i = 1}^n d_{in}^i} &= B \\
    8 \cdot 32 \cdot n + \sum_{i=1}^n (2 \cdot R \cdot S_{layer}^i + 4)\cdot(d_{in}^i - 32) &= B \cdot \sum_{i = 1}^n d_{in}^i \\
    \sum_{i=1}^n (2 \cdot R \cdot S_{layer}^i + 4)\cdot(d_{in}^i - 32) &= B \cdot \sum_{i = 1}^n d_{in}^i - 8 \cdot 32 \cdot n \\
    R \cdot \sum_{i = 1}^n 2 \cdot S_{layer}^i \cdot (d_{in}^i - 32) + \sum_{i = 1}^n 4 \cdot (d_{in}^i - 32) &= B \cdot \sum_{i = 1}^n d_{in}^i - 8 \cdot 32 \cdot n \\
    R \cdot \sum_{i = 1}^n 2 \cdot S_{layer}^i \cdot (d_{in}^i - 32) &= B \cdot \sum_{i = 1}^n d_{in}^i - 8 \cdot 32 \cdot n - \sum_{i = 1}^n 4 \cdot (d_{in}^i - 32) \\ 
    R \cdot \sum_{i = 1}^n 2 \cdot S_{layer}^i \cdot (d_{in}^i - 32) &= (B - 4) \cdot \sum_{i = 1}^n d_{in}^i - 4 \cdot 32 \cdot n \\
    R &= \frac{(B - 4) \cdot \sum_{i = 1}^n d_{in}^i - 4 \cdot 32 \cdot n }{\sum_{i = 1}^n 2 \cdot S_{layer}^i \cdot (d_{in}^i - 32)}
\end{align}
\end{proof}

\subsection{Outliers Need 8 bits}
\label{app:8bit-necessity}

Columns containing systematic outliers consistently require 8-bit precision to avoid significant accuracy degradation, regardless of the layer in which they appear. To validate this hypothesis, we assign 6-bit precision to the 32 most sensitive columns in each linear layer of Llama-2-7B, as determined by their sensitivity scores, and quantize all remaining columns to 4 bits. We choose 32 columns to match the MXINT block size, as allocating fewer would underutilize the block without reducing overhead. Next, we incrementally increase the precision of these sensitive columns from 6 bits to 8 bits, beginning with the most sensitive layers and tracking the change in model perplexity on the WikiText-2 dataset.

\begin{figure}[!ht]
    \vskip 0.2in
    \begin{center}
    \centerline{\includegraphics[width=0.5\columnwidth]{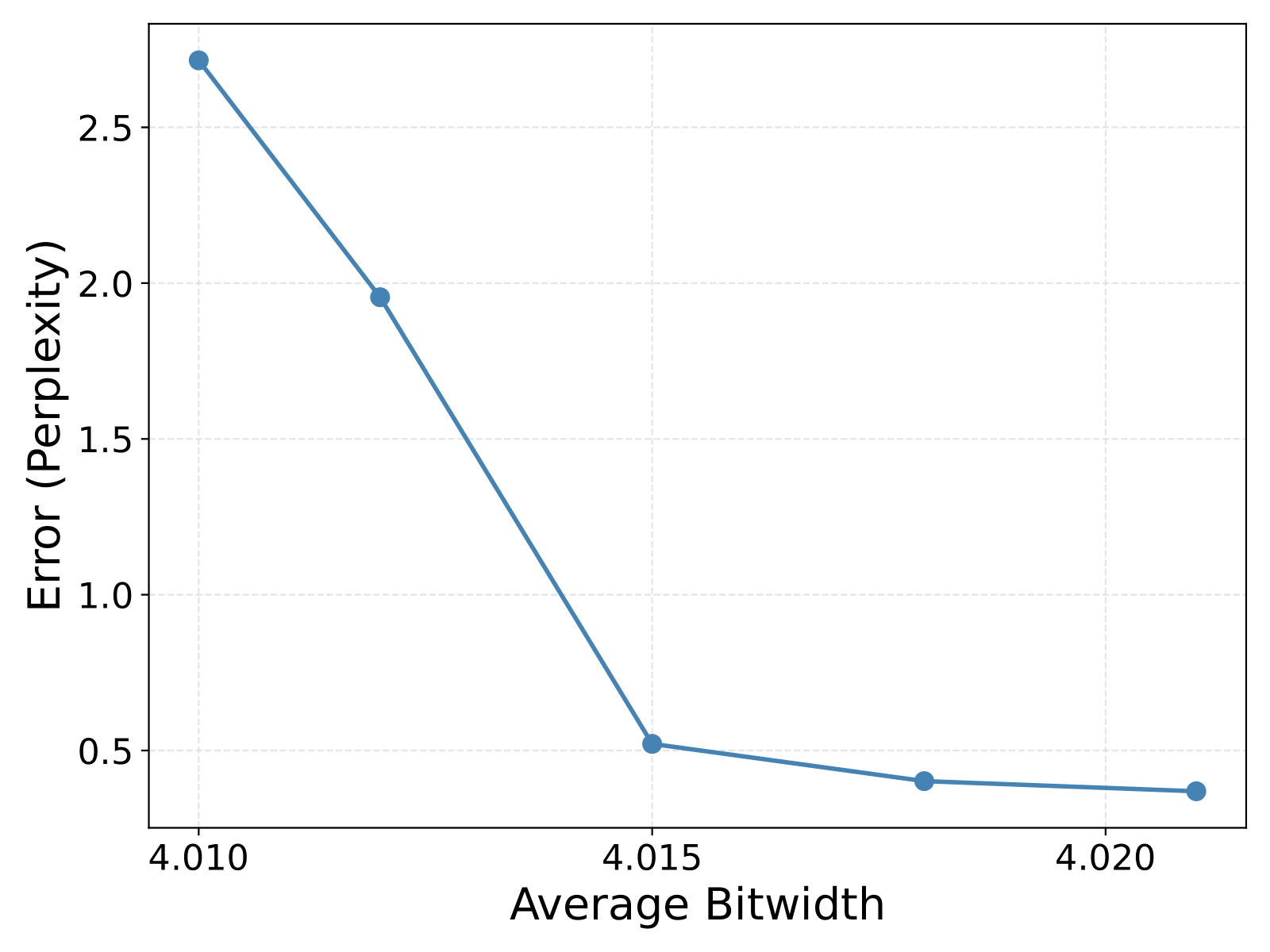}}
    \caption{Perplexity error dynamics when upgrading outlier columns from 6 to 8 bits across linear layers. }
    \label{fig_app:8bit_necessary}
    \end{center}
    \vskip -0.2in
    \end{figure}


Figure~\ref{fig_app:8bit_necessary} illustrates these results. Each dot represents a configuration with a different subset of layers using 8-bit precision, while others remain at 6 bits. The leftmost point corresponds to all layers using only 6 and 4 bits, while the rightmost point represents the configuration where all eligible layers use 8 and 4 bits. We observe that enabling 8-bit precision for the most sensitive columns leads to a substantial reduction in PPL error. Although the curve flattens as more layers adopt 8-bit precision, the PPL error still decreases notably--from 0.5 to 0.3--with only a marginal increase in average bitwidth (approximately 0.005 bits). Given this favorable trade-off, we store the top 32 most sensitive columns in every layer using 8-bit precision.

\subsection{Additional MXSens Results}
In this section, we provide additional results for MXSens on eight commonsense reasoning tasks: MMLU,  ARC-Challenge, ARC-Easy, WinoGrande, PiQA, BoolQ, HellaSwag, and OpenBookQA.
Table~\ref{tab:mxsensapp} reports the 0-shot accuracy for five models--LLaMA-2-13B, LLaMA-2-70B, LLaMA-3-8B, Mistral-7B, and Qwen1.5-7B--under both W4A4KV4 and W4A4KV16 configurations. These results further validate the generalization capabilities of MXSens across architectures and bitwidth settings.

\begin{table}[!ht]
    \scriptsize
    \renewcommand\arraystretch{1}
    \centering
    \caption{$0$-shot accuracy (\%) on various Common Sense QA tasks using MXSens.}
    \vspace{0.5em}
    \label{tab:mxsensapp}
    \resizebox{\textwidth}{!}{%
    \begin{tabular}{c|c|c:c:c:c:c:c:c:c}
    \noalign{\vspace{0.1em}}\hline\noalign{\vspace{0.1em}}
    \hline\noalign{\vspace{0.2em}}
    \textbf{\#Bits} & \textbf{Model} & \textbf{MMLU} & \textbf{A-c} & \textbf{A-e} & \textbf{WG} & \textbf{PQ} & \textbf{BQ} & \textbf{HS} & \textbf{OBQA} \\ \hline
    \multirow{5}{*}{W4A4KV16} 
        & LLaMA-2-13B     & 48.5 & 48.7 & 77.9 & 70.0 & 79.9 & 79.8 & 77.6 & 44.2 \\
        & LLaMA-2-70B     & 63.1 & 56.3 & 81.9 & 77.2 & 81.8 & 82.3 & 82.7 & 47.4 \\
        & LLaMA-3-8B      & 54.8 & 47.6 & 75.7 & 69.6 & 77.1 & 77.0 & 75.4 & 42.2 \\
        & Mistral         & 54.0 & 49.1 & 78.2 & 69.5 & 80.1 & 81.5 & 78.5 & 42.2 \\
        & Qwen1.5-7B      & 55.1 & 42.6 & 67.8 & 62.9 & 74.6 & 78.9 & 72.8 & 40.8 \\ \hline
    \multirow{5}{*}{W4A4KV4} 
        & LLaMA-2-13B     & 48.7 & 47.9 & 77.3 & 69.8 & 78.9 & 80.0 & 77.5 & 43.0 \\
        & LLaMA-2-70B     & 63.3 & 55.8 & 81.8 & 77.0 & 82.2 & 82.0 & 82.6 & 48.2 \\
        & LLaMA-3-8B      & 51.6 & 42.7 & 69.7 & 67.3 & 72.9 & 74.5 & 74.4 & 42.2 \\
        & Mistral         & 51.9 & 49.8 & 77.9 & 69.5 & 78.6 & 81.8 & 78.5 & 42.6 \\
        & Qwen1.5-7B      & 53.9 & 42.3 & 65.4 & 61.1 & 74.4 & 78.0 & 72.3 & 40.6 \\
    \end{tabular}
    }
\end{table}

\subsection{MXSens WikiText-2 Results}

We evaluate MXSens on the WikiText-2 dataset under two standard quantization settings: W4A4KV16 and W4A4KV4. Table~\ref{tab:wikitext} reports perplexity results across a diverse set of models at a fixed average bitwidth of approximately 4.02. MXSens consistently outperforms all baselines, including RRS and QuaRot, demonstrating the effectiveness of sensitivity-guided bitwidth allocation. In the W4A4KV4 setting, MXSens achieves perplexities of 7.63 and 7.65 on LLaMA-3-8B and LLaMA-3.1-8B, respectively—substantially improving over RRS (8.11 and 8.12) and QuaRot (8.38 on both models). The gains are even more pronounced in the W4A4KV16 setting, where MXSens achieves perplexities of 7.63 and 7.65.
While MXSens exhibits marginally higher perplexity than RRS and QuaRot on Qwen-1.5-7B, this does not translate to reduced performance on zero-shot tasks.

\begin{table}[!ht]
\scriptsize
\renewcommand\arraystretch{1.5}
\centering
\caption{Comparison of MXSens against state-of-the-art quantization methods on WikiText-2 perplexity of various models. We evaluate models and methods on two quantization schemes: A4W4KV4 and A4W4KV16.}
\setlength{\tabcolsep}{0.8mm}
\label{tab:wikitext}
\begin{tabular}{c|l|c:c:c:c||c||c}
\noalign{\vspace{0.1em}}\hline\noalign{\vspace{0.1em}}
\hline\noalign{\vspace{0.25em}}
\textbf{\#Bits} & {\textbf{Method}} & \textbf{LLaMA} & \textbf{LLaMA} & \textbf{LLaMA} & \textbf{LLaMA} & \textbf{Mistral} & \textbf{Qwen} \\
 \textbf{W-A-KV} &  & \textbf{2-13B} & \textbf{2-70B} & \textbf{3-8B} & \textbf{3.1-8B} & \textbf{7B} & \textbf{1.5-7B} \\
 \hline
\hline
W16A16KV16 & FP16 & 4.88 & 3.32 & 6.13 & 6.24 & 5.25 & 7.95 \\
\hdashline
\multirow{3}{*}{W4A4KV16} & QuaRot & 5.39 & 3.85 & 8.38 & 8.38 & 6.38 & 9.34 \\
  & RRS & 5.36 & 3.86 & 8.11 & 8.12 & 6.31 & \textbf{9.17} \\
 & \HL MXSens & \HL \textbf{5.24} & \HL \textbf{3.72} & \HL \textbf{7.63}& \HL \textbf{7.65} & \HL \textbf{5.77} & \HL {9.73} \\
\hdashline
\multirow{3}{*}{W4A4KV4} & QuaRot & 5.51 & 3.89 & 8.76 & 8.80 & 6.45 & 9.55 \\
  & RRS & 5.45 & 3.89 & 8.42 & 8.49 & 6.35 & \textbf{9.37} \\
 & \HL MXSens & \HL \textbf{5.32} & \HL \textbf{3.77} & \HL \textbf{7.99} & \HL \textbf{8.07} & \HL \textbf{6.08} & \HL {9.95} \\
\noalign{\vspace{0.1em}}\hline\noalign{\vspace{0.1em}}
\hline\noalign{\vspace{-0.25em}}
\end{tabular}
\end{table}

\subsection{Allocation of 8-bit columns}

As previously discussed, we fix the number of 8-bit columns to 32, while the number of 6-bit and 4-bit columns varies according to layer sensitivity.
To validate that 32 is the optimal number of 8-bit columns for our allocation scheme, we conduct an ablation study.
We evaluate the model's performance using MXSens with the number of 8-bit columns ranging from 32 to 160 in increments of 32.
Throughout this experiment, the MXINT block size is held constant at 32 elements.
The results are illustrated in Figure \ref{fig:different_alloc_8_bit}.
The figure shows that an allocation strategy with 32 8-bit columns is Pareto-efficient, achieving maximum model performance at the lowest average bitwidth.
Although allocating more 8-bit columns increases the average bitwidth, it does not yield any significant improvement in model performance.
Therefore, we conclude that using 32 8-bit columns is optimal for our method.

\begin{figure}[!ht]
    \vskip 0.2in
    \begin{center}
    \centerline{\includegraphics[width=0.5\columnwidth]{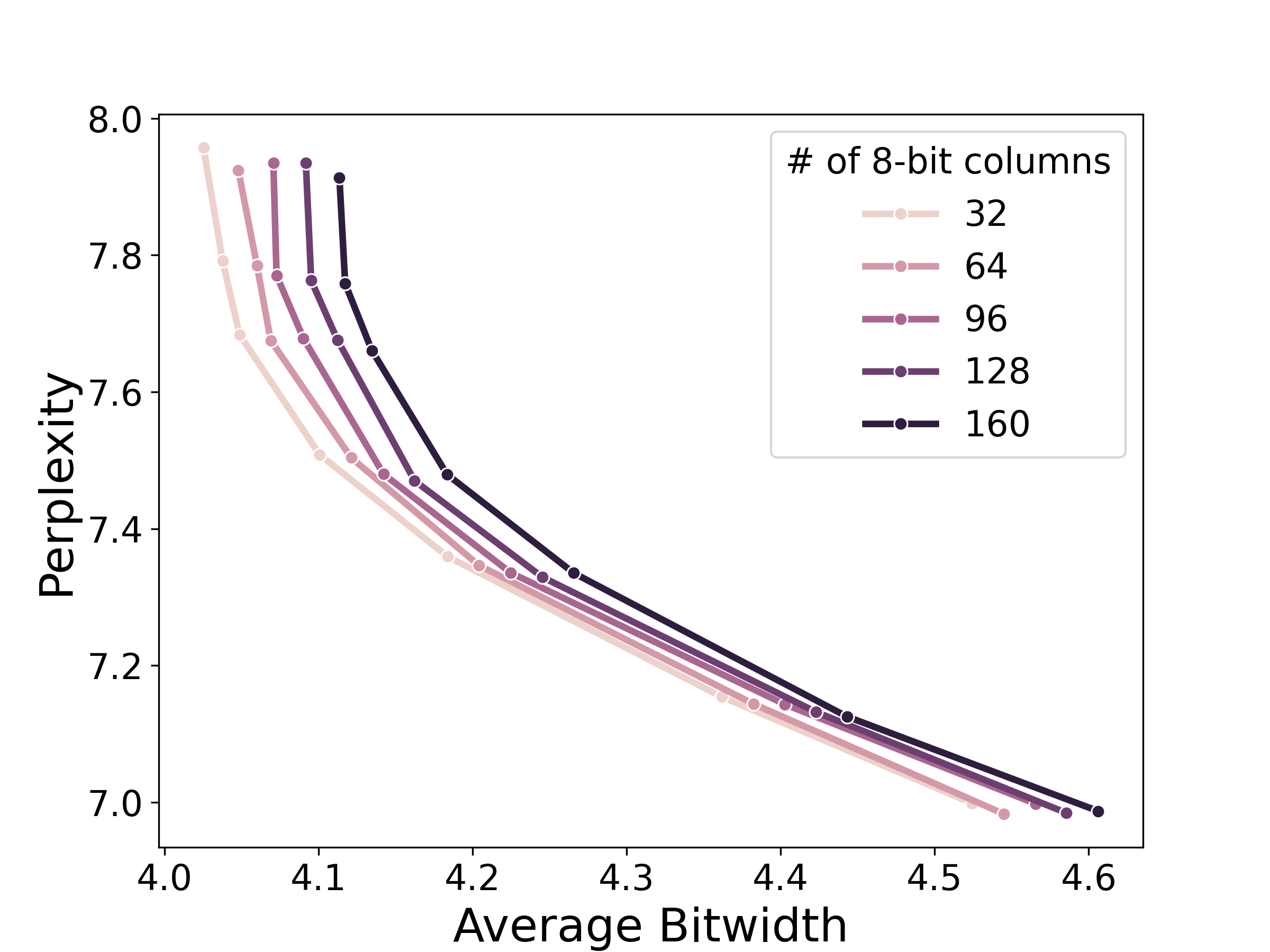}}
    \caption{Perplexity error dynamics when changing the number of 8-bit columns.}
    \label{fig:different_alloc_8_bit}
    \end{center}
    \vskip -0.2in%
\end{figure}


\subsection{Layer Sensitivity Error Dynamics}
\label{app:propagation}
Figures~\ref{fig:propagation0},~\ref{fig:propagation2} and~\ref{fig:propagation3} shows error dynamics for LLaMA-3-8B, Mistral-7B-v0.3, and Qwen1.5-7B as mentioned in Section~\ref{sec:layersens}.
\begin{figure}[!ht]
    \centering
    \begin{minipage}[t]{0.45\textwidth}
        \centering
        \includegraphics[width=\linewidth]{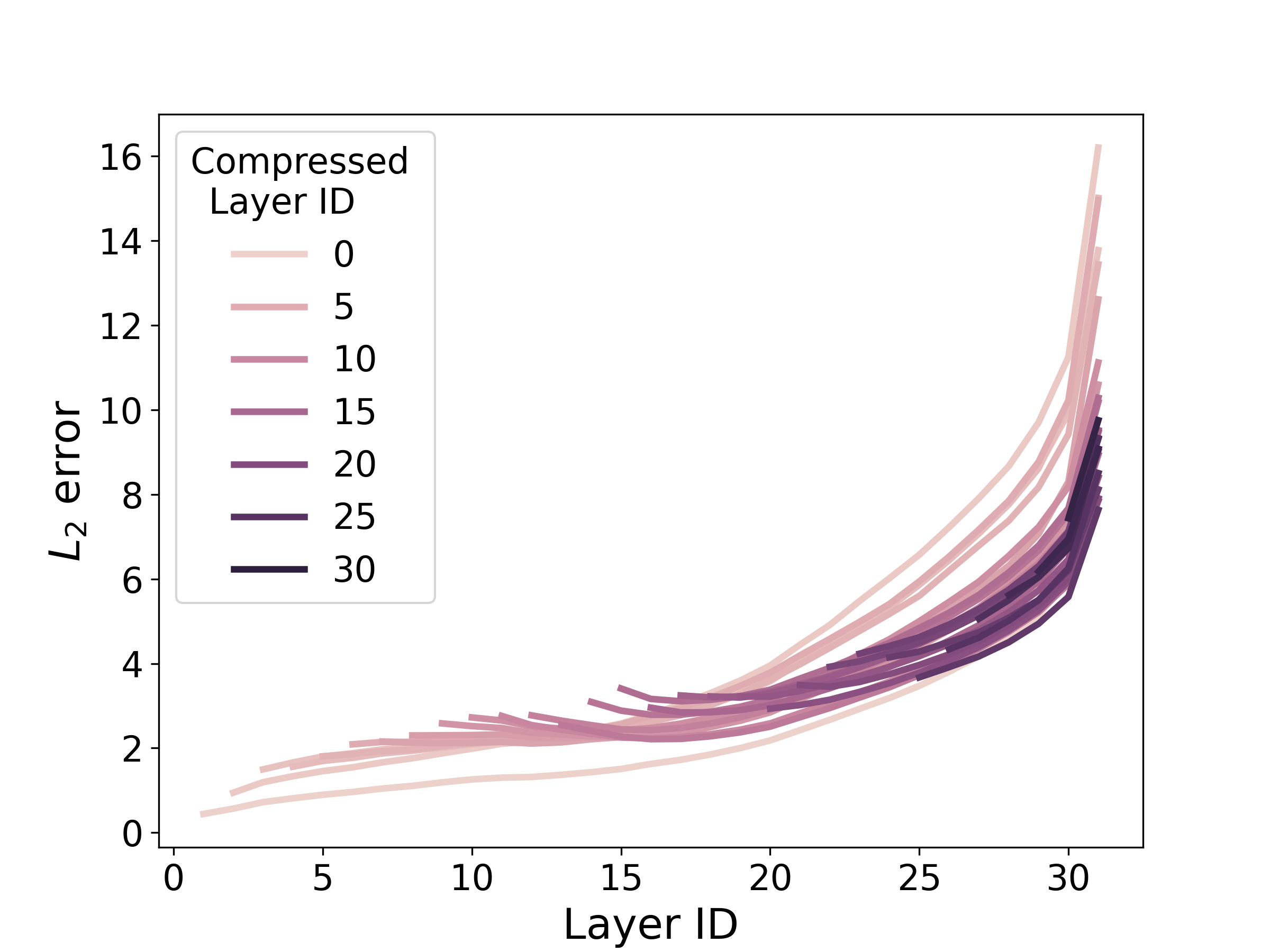}
        \caption{Error dynamics for LLaMA-3-8B}
        \label{fig:propagation0}
    \end{minipage}%
    \begin{minipage}[t]{0.45\textwidth}
        \centering
        \includegraphics[width=\linewidth]{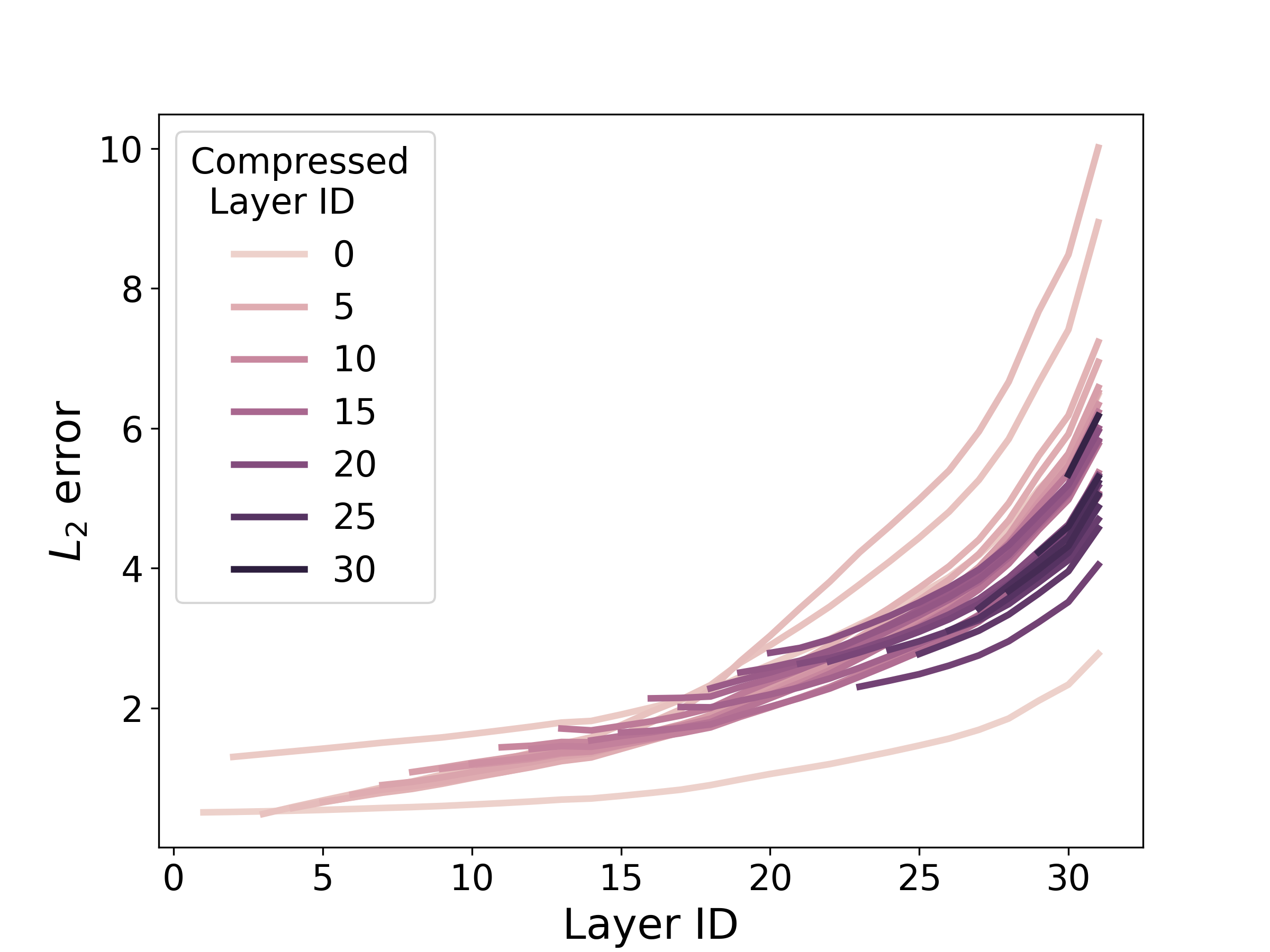}
        \caption{Error dynamics for Mistral-7B-v0.3}
        \label{fig:propagation2}
    \end{minipage}
    \begin{minipage}[t]{0.45\textwidth}
        \centering
        \includegraphics[width=\linewidth]{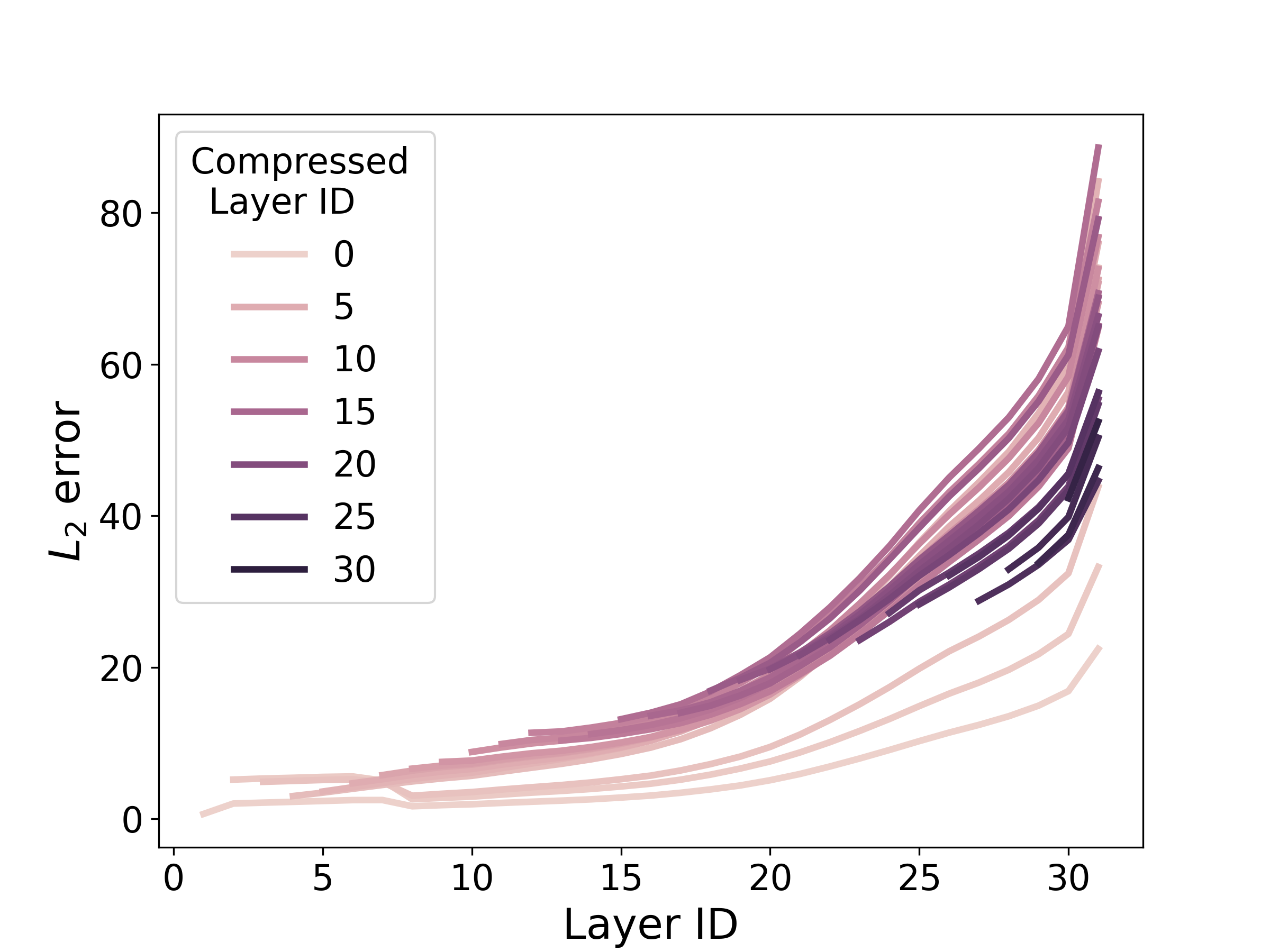}
        \caption{Error dynamics for Qwen1.5-7B}
        \label{fig:propagation3}
    \end{minipage}
\end{figure}

\subsection{Calibration Data}
Table~\ref{tab:ablation_calibration_dataset} shows the results of our ablation study on the choice of calibration dataset. We compare the effects of using Wikitext-2 and C4 as a calibration dataset. We observe that both datasets produce similar results in terms of average accuracy and perplexity.

\begin{table}[!ht]
    \scriptsize
    \renewcommand\arraystretch{1}
    \centering
    \caption{Ablation study on the calibration dataset. Perplexity scores are reported for different average bitwidths, showing a negligible impact from the choice of calibration dataset (Wikitext-2 vs. C4). Perplexity was measured on WikiText-2 dataset.}
    \vspace{0.5em}
    \label{tab:ablation_calibration_dataset}
    \resizebox{1\textwidth}{!}{%
    \begin{tabular}{c|c|c|c|c|c|c|c}
    \noalign{\vspace{0.1em}}\hline\noalign{\vspace{0.1em}}
    \hline\noalign{\vspace{0.2em}}
    \textbf{Avg. Bitwidth} & \textbf{Calibration Dataset} & \textbf{A-e} & \textbf{A-c} & \textbf{BQ} & \textbf{OBQA} & \textbf{Avg. Acc.} & \textbf{PPL} \\ \hline
    \multirow{2}{*}{4.03} 
        & Wikitext-2 & 69.15 & 45.48 & 75.54 & 42.00 & 58.04 & 7.75  \\
        & C4 & 70.58 & 45.22 & 76.18 & 41.00 & 58.25 & 7.85 \\ \hline
    
    \end{tabular}
    }
\end{table}

\subsection{Additional Results for the Ablation Study}
In the main text, we evaluate the impact of average bitwidth on LLaMA-2-7B’s performance under two mixed-precision configurations (3-6-8 and 4-6-8), demonstrating how accuracy changes with varying average bitwidth.
In this appendix, we extend this analysis with additional ablation experiments across a broader set of models.
Table~\ref{tab:qwen},~\ref{tab:llama2_13b},~\ref{tab:llama32_1b},and~\ref{tab:llama32_3b} present these results for Qwen-1.5-7B, LLaMA-2-13B, LLaMA-3.2-1B, and LLaMA-3.2-3B respectively.
These results provide deeper insight into how fine-grained precision control contributes to model quality under varying bitwidth constraints and across diverse models.

\begin{table}[!ht]
\scriptsize
    \renewcommand\arraystretch{1.2}
    \centering
    \caption{Ablation study to study the effect of various average bitwidth configurations. We evaluate $0$-shot accuracy (\%) on the Common Sense QA tasks and WikiText2 perplexity for Qwen 1.5-7B.}
    \label{tab:qwen}
    \vspace{0.5em}
    \resizebox{\textwidth}{!}{%
       \begin{tabular}{
        c|c
        |c:c:c:c:c:c:c|c|c}
    \noalign{\vspace{0.1em}}\hline\noalign{\vspace{0.1em}}
    \hline\noalign{\vspace{0.2em}}
    \textbf{Precisions} & \makecell{\textbf{Avg}\\\textbf{BW}} & \textbf{MMLU} & \textbf{A-c} & \textbf{A-e} & \textbf{WG} & \textbf{PQ} & \textbf{BQ} & \textbf{HS} & \makecell{\textbf{Avg}\\\textbf{Acc}} & \makecell{\textbf{WT-2}\\\textbf{PPL}}  \\ \hline
    \multirow{5}{*}{\makecell{MXINT3\\ + \\MXINT6\\ + \\MXINT8}} 
        & 3.03 & 34.6 & 33.0 & 48.6 & 55.5 & 65.9 & 63.6 & 52.6 & 50.5 & 30.31 \\
        & 3.10 & 38.4 & 33.0 & 51.0 & 58.9 & 69.1 & 60.1 & 62.4 & 53.3 & 13.41 \\
        & 3.25 & 40.7 & 35.4 & 56.3 & 59.1 & 71.1 & 59.1 & 64.1 & 55.1 & 12.36 \\
        & 3.50 & 42.9 & 35.4 & 57.8 & 58.6 & 71.2 & 64.6 & 66.3 & 56.7 & 11.31 \\
        & 3.75 & 45.2 & 35.8 & 59.2 & 59.3 & 73.1 & 67.1 & 67.5 & 58.2 & 10.65 \\ \hline
    \multirow{5}{*}{\makecell{MXINT4\\ + \\MXINT6\\ + \\MXINT8}} 
        & 4.03 &55.2 &42.5 &62.3 &59.6 &74.8 &78.8 &73.0 &63.7 &9.73 \\
        & 4.10 & 56.0 & 42.0 & 60.4 & 61.9 & 76.7 & 75.5 & 74.4 & 63.8 & 8.77 \\
        & 4.50 & 57.4 & 41.6 & 61.3 & 62.2 & 76.4 & 79.9 & 74.6 & 64.8 & 8.49 \\
        & 5.00 & 57.8 & 40.8 & 62.2 & 65.2 & 76.8 & 81.8 & 75.7 & 65.8 & 8.25 \\
        & 5.50 & 58.5 & 41.6 & 61.9 & 65.2 & 78.0 & 82.5 & 76.3 & 66.3 & 8.09 \\\hline
    \multirow{1}{*}{FP16} 
        & 16.00 & 59.6 & 42.8 & 62.2 & 66.1 & 78.4 & 82.5 & 77.0 & 66.9 & 7.95 \\ 
    \end{tabular}
    }
\end{table}

\begin{table}[!ht]
\scriptsize
    \renewcommand\arraystretch{1.2}
    \centering
    \caption{Ablation study to study the effect of various average bitwidth configurations. We evaluate $0$-shot accuracy (\%) on the Common Sense QA tasks and WikiText2 perplexity for Llama2-13B.}
    \label{tab:llama2_13b}
    \vspace{0.5em}
    \resizebox{\textwidth}{!}{%
       \begin{tabular}{
        c|c
        |c:c:c:c:c:c:c|c|c}
    \noalign{\vspace{0.1em}}\hline\noalign{\vspace{0.1em}}
    \hline\noalign{\vspace{0.2em}}
    \textbf{Precisions} & \makecell{\textbf{Avg}\\\textbf{BW}} & \textbf{MMLU} & \textbf{A-c} & \textbf{A-e} & \textbf{WG} & \textbf{PQ} & \textbf{BQ} & \textbf{HS} & \makecell{\textbf{Avg}\\\textbf{Acc}} & \makecell{\textbf{WT-2}\\\textbf{PPL}}  \\ \hline
    \multirow{5}{*}{\makecell{MXINT3\\ + \\MXINT6\\ + \\MXINT8}} 
        & 3.03 & 24.1 & 27.4 & 45.5 & 64.6 & 51.1 & 73.1 & 51.5 & 60.4 & 7.17 \\
        & 3.10 & 24.5 & 28.5 & 45.6 & 66.1 & 52.3 & 74.2 & 52.5 & 62.2 & 6.67 \\
        & 3.25 & 24.7 & 28.9 & 49.7 & 67.3 & 53.2 & 74.3 & 56.1 & 63.1 & 6.39 \\
        & 3.50 & 26.8 & 31.2 & 56.4 & 66.8 & 54.4 & 76.4 & 58.8 & 64.9 & 6.12 \\
        & 3.75 & 28.5 & 36.2 & 59.3 & 69.0 & 55.1 & 76.0 & 63.6 & 65.3 & 5.88 \\ \hline
    \multirow{5}{*}{\makecell{MXINT4\\ + \\MXINT6\\ + \\MXINT8}} 
        &  4.02 & 47.0 & 46.2 & 74.5 & 71.0 & 77.0 & 80.5 & 76.1 & 70.9 & 5.24 \\
        & 4.10 & 47.0 & 48.4 & 75.2 & 70.4 & 78.3 & 80.1 & 76.4 & 71.5 & 5.18 \\
        & 4.50 & 48.0 & 46.9 & 75.0 & 70.4 & 78.3 & 80.1 & 77.1 & 71.3 & 5.08 \\
        & 5.00 & 49.5 & 48.6 & 76.1 & 71.5 & 78.2 & 80.3 & 78.0 & 72.1 & 5.00 \\
        & 5.50 & 50.6 & 49.4 & 77.4 & 71.6 & 78.5 & 80.7 & 78.5 & 72.7 & 4.93 \\ \hline
    \multirow{1}{*}{FP16} 
        & 16.00 & 52.1 & 49.1 & 77.5 & 72.1 & 79.1 & 80.6 & 79.4 & 73.0 & 4.88 \\ 
    \end{tabular}
    }
\end{table}
\begin{table}[!ht]
\scriptsize
    \renewcommand\arraystretch{1.2}
    \centering
    \caption{Ablation study to study the effect of various average bitwidth configurations. We evaluate $0$-shot accuracy (\%) on the Common Sense QA tasks and WikiText2 perplexity for Llama3.2-1B.}
    \label{tab:llama32_1b}
    \vspace{0.5em}
    \resizebox{\textwidth}{!}{%
       \begin{tabular}{
        c|c
        |c:c:c:c:c:c:c|c|c}
    \noalign{\vspace{0.1em}}\hline\noalign{\vspace{0.1em}}
    \hline\noalign{\vspace{0.2em}}
    \textbf{Precisions} & \makecell{\textbf{Avg}\\\textbf{BW}} & \textbf{MMLU} & \textbf{A-c} & \textbf{A-e} & \textbf{WG} & \textbf{PQ} & \textbf{BQ} & \textbf{HS} & \makecell{\textbf{Avg}\\\textbf{Acc}} & \makecell{\textbf{WT-2}\\\textbf{PPL}}  \\ \hline
    \multirow{5}{*}{\makecell{MXINT3\\ + \\MXINT6\\ + \\MXINT8}} 
        & 3.03 & 24.0 & 24.8 & 28.0 & 50.7 & 53.9 & 53.3 & 27.5 & 37.5 & 1097.66 \\
        & 3.10 & 23.4 & 24.7 & 29.8 & 51.3 & 53.3 & 54.6 & 28.4 & 37.9 & 716.09 \\
        & 3.25 & 23.8 & 22.0 & 30.7 & 51.5 & 54.0 & 58.0 & 29.1 & 38.4 & 365.80 \\
        & 3.50 & 23.8 & 25.3 & 33.3 & 50.8 & 56.1 & 58.0 & 31.4 & 39.8 & 199.17 \\
        & 3.75 & 24.1 & 23.6 & 36.1 & 50.5 & 57.5 & 58.2 & 34.4 & 40.6 & 90.16 \\ \hline
    \multirow{5}{*}{\makecell{MXINT4\\ + \\MXINT6\\ + \\MXINT8}} 
        & 4.03 & 26.7 & 30.3 & 50.6 & 51.6 & 66.3 & 50.9 & 53.5 & 47.1 & 15.17 \\
        & 4.10 & 26.7 & 31.0 & 52.0 & 54.7 & 67.4 & 57.6 & 54.7 & 49.2 & 14.36 \\
        & 4.50 & 29.2 & 34.0 & 53.2 & 56.2 & 69.7 & 60.2 & 57.4 & 51.4 & 12.39 \\
        & 5.00 & 32.9 & 33.0 & 56.6 & 57.3 & 71.9 & 60.4 & 59.8 & 53.1 & 11.11 \\
        & 5.50 & 35.3 & 35.2 & 59.8 & 59.5 & 74.5 & 63.5 & 62.0 & 55.7 & 10.32 \\ \hline
    \multirow{1}{*}{FP16} 
        & 16.00 & 37.6 & 36.3 & 60.4 & 60.5 & 74.4 & 64.0 & 63.7 & 56.7 & 9.75 \\ 
    \end{tabular}
    }
\end{table}

\begin{table}[!ht]
\scriptsize
    \renewcommand\arraystretch{1.2}
    \centering
    \caption{Ablation study to study the effect of various average bitwidth configurations. We evaluate $0$-shot accuracy (\%) on the Common Sense QA tasks and WikiText2 perplexity for Llama3.2-3B.}
    \label{tab:llama32_3b}
    \vspace{0.5em}
    \resizebox{\textwidth}{!}{%
       \begin{tabular}{
        c|c
        |c:c:c:c:c:c:c|c|c}
    \noalign{\vspace{0.1em}}\hline\noalign{\vspace{0.1em}}
    \hline\noalign{\vspace{0.2em}}
    \textbf{Precisions} & \makecell{\textbf{Avg}\\\textbf{BW}} & \textbf{MMLU} & \textbf{A-c} & \textbf{A-e} & \textbf{WG} & \textbf{PQ} & \textbf{BQ} & \textbf{HS} & \makecell{\textbf{Avg}\\\textbf{Acc}} & \makecell{\textbf{WT-2}\\\textbf{PPL}}  \\ \hline
    \multirow{5}{*}{\makecell{MXINT3\\ + \\MXINT6\\ + \\MXINT8}}  
        & 3.03 & 23.1 & 25.3 & 32.3 & 53.6 & 55.7 & 55.6 & 35.1 & 40.1 & 94.77 \\
        & 3.10 & 24.7 & 25.0 & 38.5 & 51.1 & 59.1 & 52.4 & 38.2 & 41.3 & 53.76 \\
        & 3.25 & 25.0 & 26.5 & 41.3 & 53.7 & 61.4 & 49.0 & 44.0 & 43.0 & 31.44 \\
        & 3.50 & 26.9 & 27.9 & 44.9 & 53.6 & 63.8 & 55.2 & 50.6 & 46.1 & 21.12 \\
        & 3.75 & 28.2 & 30.5 & 50.1 & 53.0 & 64.5 & 54.7 & 54.3 & 47.9 & 16.68 \\ \hline
    \multirow{5}{*}{\makecell{MXINT4\\ + \\MXINT6\\ + \\MXINT8}} 
        & 4.03 & 43.1 & 40.3 & 63.5 & 61.2 & 73.2 & 64.7 & 68.1 & 59.2 & 9.95 \\
        & 4.10 & 45.8 & 40.1 & 63.6 & 62.5 & 73.8 & 69.2 & 68.9 & 60.5 & 9.55 \\
        & 4.50 & 47.9 & 41.4 & 68.3 & 66.0 & 74.6 & 72.3 & 70.3 & 63.0 & 8.83 \\
        & 5.00 & 50.8 & 42.9 & 68.2 & 67.2 & 75.8 & 72.0 & 71.8 & 64.1 & 8.33 \\
        & 5.50 & 52.6 & 45.1 & 69.3 & 68.0 & 76.4 & 72.5 & 73.0 & 65.3 & 8.05 \\ \hline
    \multirow{1}{*}{FP16} 
        & 16.00 & 54.4 & 46.1 & 71.7 & 69.8 & 76.7 & 73.2 & 73.7 & 66.5 & 7.81 \\ 
    \end{tabular}
    }
\end{table}

\end{document}